\documentclass{article}
\usepackage{nips11submit_e,times}
\usepackage{graphicx} 
\usepackage{subfigure} 
\usepackage{algorithm}
\usepackage{algorithmic}
\usepackage{times}
\usepackage{amsmath,amsthm,amsfonts,amssymb,mathrsfs}

\usepackage[fleqn,tbtags]{mathtools}

\DeclareMathOperator\range{range}

\newcommand{\beq}{\begin{equation}} \newcommand{\eeq}{\end{equation}}

\newcommand{\La}{{\Lambda}}
\newcommand{\la}{{\langle}}
\newcommand{\ra}{{\rangle}}
\newcommand{\ka}{{{\kappa}}}

\newcommand{\bi}{\begin{itemize}}
\newcommand{\be}{\begin{enumerate}}
\newcommand{\ei}{\end{itemize}}
\newcommand{\ee}{\end{enumerate}}

\newcommand{\reg}{{\varphi}}

\newcommand{\R}{{\mathbb R}}
\newcommand{\N}{{\mathbb N}}

\newcommand{\calC}{{\cal C}}

\newcommand{\lam}{{\lambda}}

\newcommand{\calB}{{\cal B}}

\newcommand{\lb}{{\langle}}
\newcommand{\rb}{{\rangle}}

\def\boldf#1{\hbox{\rlap{$#1$}\kern.4pt{$#1$}}}

\newcommand{\trans}{^{\scriptscriptstyle \top}}

\newcommand{\hbeta}{{\hat \beta}}

\newcommand{\bh}{{\hat \beta}}
\newcommand{\lh}{{\hat \lambda}}
\newcommand{\ph}{{\hat p}}
\newcommand{\qh}{{\hat q}}

\mathtoolsset{showonlyrefs,showmanualtags}


\newcommand{\rd}{\R^d}

\DeclareMathOperator\prox{prox}
\DeclareMathOperator\argmin{argmin}
\DeclareMathOperator\argmax{argmax}

\begin{document}

\title{A General Framework for Structured Sparsity via Proximal Optimization}


\nipsfinalcopy
\author{
{\bf Andreas Argyriou} \\
Toyota Technological Institute \\ at Chicago
\and
{\bf Luca Baldassarre} \\
University College London
\and
{\bf Jean Morales} \\
University College London
\and
{\bf Massimiliano Pontil}\\
University College London
}

\maketitle

\begin{abstract}
We study a generalized framework for {\em structured sparsity}. 
It extends the well-known methods of Lasso and
Group Lasso by incorporating additional constraints on
the variables as part of a convex optimization problem.
This framework provides a straightforward way of favouring prescribed {\em sparsity patterns},
such as orderings, contiguous regions and overlapping groups, among others. 
Existing optimization methods are limited to specific constraint sets and tend to not scale well
with sample size and dimensionality. 
We propose a novel {\em first order proximal method}, which builds upon 
results on fixed points and successive approximations.
The algorithm can be applied to a general class of conic and norm constraints sets and relies on a proximity operator subproblem which can be computed explicitly. 
Experiments on different regression problems demonstrate the
efficiency of the optimization algorithm and its scalability with the
size of the problem.  They also demonstrate state of the art
statistical performance, which improves over Lasso and StructOMP.
\end{abstract}

\newcommand{\bproof}{\begin{proof}}
\newcommand{\eproof}{\end{proof}}
\newenvironment{essential}{\begin{minipage}[t]{11.4cm}
\textwidth=15.2cm\parskip=5mm\vspace{0.2cm}}{\vspace{0.5cm}\end{minipage}}

\renewcommand{\theequation}{\thesection.\arabic{equation}}
\numberwithin{equation}{section}
\renewcommand{\bar}{\overline}
\newtheorem{theorem}{Theorem}[section]
\newtheorem{question}{Question}[section]
\newtheorem{proposition}{Proposition}[section]
\newtheorem{lemma}{Lemma}[section]
\newtheorem{corollary}{Corollary}[section]
\newtheorem{definition}{Definition}[section]
\newtheorem{problem}{\em Problem}[section]
\newtheorem{remark}{Remark}[section]
\newtheorem{example}{Example}[section]
\newtheorem{case}{Case}[section]
\newtheorem{assumption}{Assumption}[section]

\renewcommand\proofname{\bf Proof} 
\def\eop{$\rule{1.3ex}{1.3ex}$}
\renewcommand\qedsymbol\eop  
\numberwithin{equation}{section}
\makeatletter



\section{Introduction}


We study the problem of learning a sparse linear regression model. The
goal is to estimate a parameter vector $\beta^* \in \R^n$ from a
vector of measurements $y \in \R^m$, obtained from the model $y = X
\beta^* + \xi$, where $X$ is an $m \times n$ matrix, which may be
fixed or randomly chosen, and $\xi \in \R^m$ is a vector resulting from
the presence of noise. We are interested in sparse estimation under
additional conditions on the sparsity pattern of $\beta^*$. In other
words, not only we do expect that $\beta^*$ is sparse but also that it
exhibits {\em structured sparsity}, namely certain configurations of
its nonzero components are preferred to others. This problem
arises in several applications, such as regression, image denoising,
background subtraction etc. -- see \cite{Jenatton,huang2009} for a discussion.

In this paper, we build upon the structured sparsity framework
recently proposed by \cite{MMP-10}. It is a regularization method,
formulated as a convex, nonsmooth optimization problem over a vector
of auxiliary parameters. This approach provides a constructive way to
favour certain sparsity patterns of the regression vector $\beta$. 
Specifically, this formulation involves a penalty function 
given by the formula $\Omega(\beta|\Lambda) = \inf \left\{ \frac{1}{2} \sum_{i=1}^n \left(\frac{\beta_i^2}{\lambda_i} + \lambda_i\right): \lambda \in \La\right\}$. 
This function can be interpreted as an extension of a well-known
variational form for the $\ell_1$ norm. 
The convex constraint set $\La$ provides a means to incorporate prior
knowledge on the magnitude of the components of the regression vector.
As we explain in Section \ref{sec:2}, the sparsity pattern of $\beta$
is contained in that of the auxiliary vector $\lam$
at the optimum. Hence, if the set $\La$ allows only for certain
sparsity patterns of $\lambda$, the same property will be
``transferred'' to the regression vector $\beta$.

In this paper, we propose a tractable class of regularizers of the above form 
which extends the examples described in \cite{MMP-10}.
Specifically, we study in detail the cases in which the set $\Lambda$
is defined by norm or conic constraints, combined with a linear map. 
As we shall see, these cases include formulations which can be used for
learning {\em graph sparsity} and {\em hierarchical sparsity}, in the
terminology of \cite{huang2009}. That is, the sparsity pattern of the
vector $\beta^*$ may consist of a few contiguous regions in one or
more dimensions, or may be embedded in a tree structure. 
These sparsity problems arise
in several applications, ranging from functional magnetic resonance
imaging \cite{gramfort2009,xiang2009}, to scene recognition in vision
\cite{harzallah2009}, to multi-task learning \cite{AEP,oboz09} and
to bioinformatics \cite{rapaport2008} -- to mention but a few.

A main limitation of the technique described in \cite{MMP-10} is that in many cases of interest 
the penalty function cannot be easily computed. This makes it difficult to solve 
the associated regularization problem. For example \cite{MMP-10} proposes to use block coordinate descent, and this method is feasible only for a limited choice of the set $\Lambda$.
The main contribution of this paper is an efficient proximal point method to solve regularized 
least squares 
with the penalty function $\Omega(\beta|\Lambda)$ in the general case of set $\Lambda$ described above. The method combines a fast fixed point iterative scheme, which is inspired by recent work by \cite{MSX} with an accelerated first order method equivalent to
FISTA \cite{fista}. 
We present a numerical study of the
efficiency of the proposed method and a statistical comparison of the
proposed penalty functions with the greedy method of \cite{huang2009}
and the Lasso.

Recently, there has been significant research interest on structured sparsity
and the literature on this subject is growing fast, see for example 
\cite{AEP,huang2009,jacob,Jenatton,yuan} and references
therein for an indicative list of papers. In this work, we mainly focus on
convex penalty methods and compare them to greedy methods
\cite{baraniuk2008,huang2009}. The latter provide a natural extension of
techniques proposed in the signal processing community and, as argued
in \cite{huang2009}, allow for a significant performance improvement
over more generic sparsity models such as the Lasso or the
Group Lasso \cite{yuan}. The former methods have until recently
focused mainly on extending the Group Lasso, by
considering the possibility that the groups overlap according to
certain hierarchical structures \cite{Jenatton,binyu}.
Very recently, general choices of convex penalty functions have
been proposed
\cite{submodular,MMP-10}. 
In this paper we build upon \cite{MMP-10}, providing  
both new instances of the penalty function and improved optimization
algorithms.

The paper is organized as follows. In Section \ref{sec:2}, we set our
notation, review the method of \cite{MMP-10} and present the new sets
for inducing structured sparsity.
In Section \ref{sec:prox}, we present our technique for computing the proximity operator 
of the penalty function and the resulting accelerated proximal method.
In Section \ref{sec:5}, we report numerical experiments with this method.

\section{Framework and extensions}
\label{sec:2}

In this section, we introduce our notation, review the learning
method which we extend in this paper and present the new sets for
inducing structured sparsity.

We let $\R_+$ and $\R_{++}$ be the nonnegative and positive real line,
respectively. For every $\beta \in \R^n$ we define $|\beta| \in
\R_+^n$ to be the vector 
$|\beta| = (|\beta_i|)_{i=1}^n$. For every $p \geq 1$, we
define the $\ell_p$ norm of $\beta$ as $\|\beta\|_p = \left(\sum_{i=1}^n
 |\beta_i|^p\right)^{\frac{1}{p}}$.
If $C \subseteq \R^n$, we denote by $\delta_C: \R^n 
\rightarrow \R$ the indicator function of the set $C$, that is, $\delta_C(x) = 0$ if $x 
\in C$ and $\delta_C(x) = +\infty$ otherwise.

We now review the structured sparsity approach of \cite{MMP-10}.
Given an $m \times n$ input data matrix $X$ and an output vector $y \in \R^m$, obtained
from the linear regression model $y=X\beta^*+\xi$ discussed earlier, they consider 
the optimization problem
%
\beq
\inf \left\{\frac{1}{2} \|X\beta-y\|^2_2 +   
\rho\, \Gamma(\beta,\lam)
: \beta \in \R^n, \lambda\in\La
 \right\}
\label{eq:primal}
\eeq
where $\rho$ is a positive parameter, $\La$ is a prescribed convex
subset of the positive orthant $\R^n_{++}$ and the function $\Gamma :
\R^n \times \R_{++}^n \rightarrow \R$ is given by the formula
$$\Gamma(\beta,\lam) = \frac{1}{2}\sum_{i=1}^n
\left(\frac{\beta_i^2}{\lambda_i} + \lambda_i\right).$$ 
Note that the
infimum over $\lambda$ in general is not attained, however the infimum
over $\beta$ is always attained. 
Since the auxiliary vector $\lam$ appears only in the second term
and our goal is to estimate $\beta^*$, we may also directly consider
the regularization problem
\beq
\min \left\{\frac{1}{2}\|X\beta-y\|^2_2 + \rho\, \Omega(\beta|\Lambda) :
\beta \in \R^n\right\}, 
\label{eq:method}
\eeq
where the penalty function takes the form $\Omega(\beta|\Lambda) = \inf
\left\{\Gamma(\beta,\lam): \lam \in \La\right\}$.
This problem is still convex because the function $\Gamma$ is jointly
convex \cite{boyd}. Also, note that the function $\Omega$ is independent of
the signs of the components of $\beta$.

In order to gain some insight about the above methodology, we note that,
for every $\lam \in \R_{++}^n$, the quadratic function 
$\Gamma(\cdot,\lam)$ provides an upper bound to the $\ell_1$ norm, namely
it holds that $\Omega(\beta|\Lambda) \geq \|\beta\|_1$ and the inequality is tight if and only if $|\beta| \in
\La$. 
This fact is an immediate consequence of the
arithmetic-geometric inequality. In particular, we see that if we 
choose $\La = \R_{++}^n$, the method \eqref{eq:method} reduces to the
Lasso\footnote{More generally, method
\eqref{eq:method} includes the Group Lasso method, see \cite{MMP-10}.}. 
The above observation suggests a heuristic interpretation of the 
method \eqref{eq:method}: among all vectors $\beta$ which have a fixed 
value of the $\ell_1$ norm, the penalty function $\Omega$ will
encourage those for which $|\beta| \in \La$. Moreover, when 
$|\beta| \in \La$ the function $\Omega$ reduces to the $\ell_1$ norm and, so, 
the solution of problem $\eqref{eq:method}$ is expected to be
sparse. The penalty function therefore will encourage certain desired
sparsity patterns. 

The last point can be better understood by looking at problem
\eqref{eq:primal}. For every solution $(\hbeta,\lh)$, 
the sparsity pattern of $\hbeta$ is contained in the sparsity pattern 
of $\lh$, that is, the indices associated with nonzero components of $\hbeta$ are a subset of
those of $\lh$. Indeed, if $\lh_i=0$ it must hold that $\hbeta_i = 0$ as well,
since the objective would diverge otherwise (because of the ratio
$\beta_i^2/\lambda_i$). Therefore, if the set $\La$ favours
certain sparse solutions of $\lh$, the same sparsity pattern will be reflected
on $\hbeta$. Moreover, the regularization term $\sum_i \lambda_i$ 
favours sparse vectors $\lambda$.
For example, a constraint of the form $\lam_1\geq \dots \geq \lam_n$, 
favours consecutive zeros at the end of $\lambda$ and nonzeros
everywhere else. This will lead to zeros at the end of $\beta$ as
well. Thus, in many cases like this, it is easy to incorporate a
convex constraint on $\lambda$, whereas it may not be possible to do
the same with $\beta$. 

In this paper we consider sets $\Lambda$ of the form
\[
\Lambda = \{\lambda\in\R^n_{++} : A\lam \in S\}
\label{eq:genLa}
\]
where $S$ is a convex set and $A$ is a $k \times n$ matrix.
Two main choices of interest are when $S$ is a convex cone or the unit ball of a norm. We shall
refer to the corresponding set $\Lambda$ as {\em conic constraint} or {\em norm constraint} set, respectively.
We next discuss two specific examples, which highlight the flexibility
of our approach and help us understand the sparsity patterns favoured by each choice.

Within the conic constraint sets, we may choose $S = \R_{++}^k$, so that
$\La = \{\lam \in\R^n_{++} : A \lam \geq 0 \}$, which can be used to
encourage hierarchical sparsity. In \cite{MMP-10} they considered the
set $\Lambda = \{\lambda \in\R^n_{++}: \lambda_1\geq\dots\geq
\lambda_n\}$ and derived an explicit formula of the corresponding
regularizer $\Omega(\beta|\Lambda)$. Note that for a generic matrix
$A$ the penalty function cannot be computed explicitly. In Section 3,
we show how to overcome this difficulty.

Within the norm constraint sets, we may choose $S$ to be the $\ell_1$-unit ball and $A$ the edge map of a graph $G$ with edge set $E$, so that
$
\La = \left\{\lam \in\R^n_{++} : \sum_{(i,j) \in E} |\lambda_i - \lambda_j| \leq 1\right\}.
$
This set can be used to encourage sparsity patterns consisting of few
connected regions/subgraphs of the graph $G$. For example if $G$ is a
1D-grid we have that $\La = \{\lam \in\R^n_{++} : \sum_{i=1}^{n-1}
|\lambda_{i+1} - \lambda_i| \leq 1\}$, so the corresponding penalty
will favour vectors $\beta$ whose absolute values are constant. 

For a generic convex set $\La$, since the penalty function $\Omega$ is not
easily computable, one needs to deal directly with problem \eqref{eq:primal}. 
To this end, we recall here the definition of the proximity
operator \cite{moreau62}.
\setlength{\abovedisplayskip}{1ex}
\setlength{\belowdisplayskip}{1ex}
\begin{definition}
Let $\reg$ be a real-valued convex function on $\rd$. The proximity
operator of $\reg$ is defined, for every $t\in\rd$ by $\prox_\reg (t) := \argmin \left\{ \dfrac{1}{2} \|z-t\|_2^2 + \reg(z) : z \in \rd \right\}$.
\label{def:prox}
\end{definition}
The proximity operator is well-defined, because the above minimum exists 
and is unique.
\section{Optimization Method}
\label{sec:prox}

In this section, we discuss how to solve problem (2.1) using an accelerated first-order method that scales linearly with respect to the problem size, as we later show in the experiments.
This method relies on the computation of the proximity operator of the function $\Gamma$, restricted to $\R^n \times \Lambda$.
Since the exact computation of the proximity operator is possible only in simple special cases of sets $\Lambda$, we present here an efficient fixed-point algorithm for computing the proximity operator that can be applied to a wide variety of constraints. 
Finally, we discuss an accelerated proximal method that uses our algorithm.



\subsection{Computation of the Proximity Operator}
\label{sec:prox_map}

According to Definition \ref{def:prox}, the proximal operator of $\Gamma$ at $(\alpha,\mu) \in \R^n \times \R^n$ 
is the solution of the problem
\beq
\min
\left\{
\frac{1}{2}
 \|(\beta,\lam) - (\alpha,\mu)\|^2_2 + \rho \,\Gamma(\beta,\lam) : 
\beta \in \R^n, \lam \in \Lambda \right\}. 
\label{eq:yyyy}
\eeq
For fixed $\lam$, a direct computation yields that the objective function in \eqref{eq:yyyy} attains its minimum at 
\beq
\beta_i(\lam) = \frac{\alpha_i \lam_i}{\lam_i + \rho}.
\label{eq:beta-lam}
\eeq
Using this equation we obtain the simplified problem
\beq
\hspace{-.01truecm}
\min\left\{
\frac{1}{2}\|\lam-\mu\|^2 \hspace{-.04truecm}+\hspace{-.04truecm} 
\frac{\rho}{2} \sum_{i=1}^n \hspace{-.1truecm}\left(\hspace{-.08truecm}\frac{\alpha_i^2}{\lam_i+\rho} + \lam_i\hspace{-.08truecm}\right) \hspace{-.03truecm}: \hspace{-.03truecm}\lambda \in \Lambda \right\}.
\hspace{-.08truecm}
\label{eq:sim}
\eeq
This problem can still be interpreted as a proximity map computation. We discuss how to
solve it under 
our general assumption $\La = \{\lam \in\R^n_{++}: A \lam \in S\}$. 
Moreover, we assume that the projection on the set $S$ can be easily computed. 
To this end, we define the $(n+k) \times n$ matrix 
$B\trans = [I,A\trans]$
and the function $\reg(s,t) = \reg_1(s)+\reg_2(t)$, $(s,t) \in \R^{n} \times \R^k$, where 
$$
\reg_1(s) = \frac{\rho}{2} \sum_{i=1}^n \left(\frac{\alpha_i^2}{s_i+\rho} + s_i
+ \delta_{\R_{++}}(s_i) \right),
$$
and $\reg_2(t) = \delta_S(t)$. 
Note that the solution of problem \eqref{eq:sim} is the same as the proximity map of the linearly 
composite function $\reg \circ B$ at $\mu$, which solves the problem
$$
\min \left\{ \frac{1}{2} \|\lam-\mu\|^2 + \reg(B\lambda) : \lambda \in \R^n\right\}.
$$
At first sight this problem seems difficult to solve. However, it turns
out that if the proximity map of the function $\reg$ has a simple form,
the following theorem adapted from \cite[Theorem 3.1]{MSX} can be used
to accomplish this task. For ease of notation we set $d=n+k$.
\begin{theorem}
Let $\reg$ be a convex function on $\R^{d}$, $B$ a $d\times n$ matrix, 
$\mu \in\R^n$, $c >0$,  and define the 
mapping $H: \R^{d} \rightarrow \R^{d}$ at $v \in \R^d$ as
$$H(v)=
(I - \prox_{\frac{\reg}{c}}) ((I-c BB\trans)v + B\mu).
$$ 
Then, for any fixed point ${\hat v}$ of $H$, it holds that
\beq
\prox_{\reg\circ B}(\mu)= \mu - c B\trans {\hat v}.
\label{eq:prox-fixed}
\eeq
\label{thm:fixed}
\end{theorem}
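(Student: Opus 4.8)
The plan is to identify the fixed point $\hat v$ with a (scaled) subgradient certificate that witnesses optimality of the claimed primal point. First I would set $\lambda^\star = \mu - c B\trans \hat v$, which is exactly the point the theorem asserts equals $\prox_{\reg\circ B}(\mu)$, and recall the standard characterization of a proximity operator: $p = \prox_g(x)$ if and only if $x - p \in \partial g(p)$. The goal is then to exhibit a subgradient $w \in \partial\reg(B\lambda^\star)$ for which $\lambda^\star - \mu + B\trans w = 0$, since this is precisely the first-order optimality condition for $\lambda^\star$ to minimize $\frac12\|\lambda-\mu\|^2 + \reg(B\lambda)$, i.e.\ to be $\prox_{\reg\circ B}(\mu)$.

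Next I would unpack the fixed-point equation $\hat v = H(\hat v)$. Writing $z = (I - cBB\trans)\hat v + B\mu$ and substituting the definition of $\lambda^\star$, a short computation gives $z = \hat v + B\lambda^\star$. The fixed-point condition $\hat v = (I - \prox_{\reg/c})(z) = z - \prox_{\reg/c}(z)$ then collapses to $\prox_{\reg/c}(z) = z - \hat v = B\lambda^\star$. Applying the proximity characterization to the function $\reg/c$ at the point $z$ yields $z - B\lambda^\star \in \partial(\reg/c)(B\lambda^\star)$; since $z - B\lambda^\star = \hat v$ and $\partial(\reg/c) = \frac1c\,\partial\reg$, this reads exactly $c\hat v \in \partial\reg(B\lambda^\star)$. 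This supplies the candidate subgradient: set $w = c\hat v$.

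Finally I would verify optimality directly, which neatly avoids invoking a subdifferential chain rule for the composite $\reg\circ B$ (and its attendant constraint qualification, a delicate point here since $\reg_1$ carries the indicator $\delta_{\R_{++}}$ and $\reg_2 = \delta_S$). By convexity of $\reg$ together with $w \in \partial\reg(B\lambda^\star)$, for every $\lambda$ one has $\reg(B\lambda) \geq \reg(B\lambda^\star) + \lb B\trans w, \lambda - \lambda^\star\rb$, while the quadratic term satisfies $\frac12\|\lambda-\mu\|^2 \geq \frac12\|\lambda^\star-\mu\|^2 + \lb \lambda^\star - \mu, \lambda-\lambda^\star\rb$. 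Adding these and using $\lambda^\star - \mu + B\trans w = -cB\trans\hat v + cB\trans\hat v = 0$ shows the objective at any $\lambda$ is at least its value at $\lambda^\star$; hence $\lambda^\star$ is the minimizer and $\prox_{\reg\circ B}(\mu) = \mu - cB\trans\hat v$, as claimed.

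The step I expect to be the main obstacle is the second one: correctly threading the fixed-point relation through the reflection operator $I - \prox_{\reg/c}$ and tracking the scaling by $c$, so that the residual $z - B\lambda^\star$ lands precisely on $\hat v$ and the factor $c$ converts $\partial(\reg/c)$ back into $\partial\reg$. Once that bookkeeping is pinned down, the remainder is the routine proximity-operator optimality condition and a one-line convexity estimate.
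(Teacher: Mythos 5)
Your proof is correct. Note that the paper itself contains no proof of Theorem \ref{thm:fixed}: it is stated as an adaptation of Theorem 3.1 of \cite{MSX}, so the comparison is really against that source. The argument in \cite{MSX} runs the same circle of ideas but in the opposite, characterization-style direction: one starts from the optimality condition $0 \in \lambda - \mu + \partial(\reg\circ B)(\lambda)$, invokes the subdifferential chain rule $\partial(\reg\circ B)(\lambda) = B\trans\partial\reg(B\lambda)$ (an equality that requires a constraint qualification when $\reg$ is extended-valued), writes the resulting subgradient as $c\hat v$, and converts the inclusion $c\hat v \in \partial\reg(B\lambda)$ into the fixed-point equation $\hat v = H(\hat v)$ via the identity $w \in \partial g(p) \Leftrightarrow p = \prox_g(p+w)$. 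You reverse this: starting from the fixed point, you extract the certificate $c\hat v \in \partial\reg(B\lambda^\star)$ through the computation $z = \hat v + B\lambda^\star$, $\prox_{\reg/c}(z) = B\lambda^\star$, and then verify minimality of $\lambda^\star$ directly from the subgradient inequality plus the quadratic expansion. This buys two things: (i) you only use the elementary inclusion $B\trans\partial\reg(B\lambda^\star) \subseteq \partial(\reg\circ B)(\lambda^\star)$, which holds with no qualification --- a relevant point, since in the paper's application $\reg$ involves the indicators $\delta_{\R_{++}}$ and $\delta_S$; and (ii) your proof establishes exactly the implication the theorem asserts (fixed point implies prox formula) and nothing more, whereas the two-way characterization is needed only to guarantee that fixed points exist, which the theorem does not claim. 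The bookkeeping you flagged as the main obstacle is indeed where the content lies, and your handling of it is correct.
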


\vspace{-4ex}
The {\em Picard iterates} $\{v_s : s \in \N\}\subseteq \R^d$, starting at $v_0 \in \R^{d}$,
are defined by the recursive equation $v_s = H(v_{s-1})$.  
Since 
the operator $I- \prox_{\reg}$ is {\em nonexpansive}\footnote{A mapping $T: \R^d \rightarrow \R^d$ is called nonexpansive if
$\|T(v)-T(v')\|_2\leq \|v-v'\|_2$, for every $v,v'\in \R^d$.} 
(see e.g. \cite{combettes}), the map $H$ is nonexpansive if $c\in
\left[0,\frac{2}{||B||^2}\right]$. Despite this, the Picard
iterates are not guaranteed to converge to a fixed point of $H$. However, a simple modification
with an averaging scheme can be used to compute the fixed point.

\begin{theorem} \cite{opial}
Let $H: \R^{d} \rightarrow \R^{d}$ be a nonexpansive mapping which has
at least one fixed point and let $H_\ka := \ka I + (1-\ka) H$.
Then, for every $\ka \in (0,1)$, the Picard iterates of $H_\ka$
converge to a fixed point of $H$.
\label{thm:opial}
\end{theorem}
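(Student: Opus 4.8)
The plan is to run the standard Krasnoselskii--Mann argument, which in the finite-dimensional setting $\R^d$ avoids any appeal to weak convergence. First I would observe that the fixed points of $H_\ka$ coincide with those of $H$: since $H_\ka(v) - v = (1-\ka)(H(v)-v)$ and $\ka \in (0,1)$, the equation $H_\ka(v)=v$ holds exactly when $H(v)=v$. Thus it suffices to show that the Picard iterates $\{v_s\}$ of $H_\ka$ converge to some fixed point of $H$. Let $v^*$ be any fixed point of $H$, which exists by hypothesis.

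The crux is the quadratic identity $\|\ka x + (1-\ka) z\|^2 = \ka\|x\|^2 + (1-\ka)\|z\|^2 - \ka(1-\ka)\|x-z\|^2$, applied with $x = v_s - v^*$ and $z = H(v_s) - v^*$. Writing $v_{s+1}-v^* = \ka x + (1-\ka) z$ and using $\|H(v_s)-v^*\| = \|H(v_s)-H(v^*)\| \leq \|v_s - v^*\|$ from nonexpansiveness, this yields the fundamental estimate
$$
\|v_{s+1}-v^*\|^2 \leq \|v_s - v^*\|^2 - \ka(1-\ka)\,\|v_s - H(v_s)\|^2.
$$
From here the argument is routine. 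The inequality shows that $\{\|v_s - v^*\|\}$ is non-increasing, hence convergent, and that the iterates are bounded (\emph{Fej\'er monotonicity}). Telescoping the estimate and using $\ka(1-\ka)>0$ forces $\sum_s \|v_s - H(v_s)\|^2 < \infty$, so the residuals satisfy $v_s - H(v_s) \to 0$ (\emph{asymptotic regularity}).

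To conclude, I would invoke compactness: in $\R^d$ the bounded sequence $\{v_s\}$ has a convergent subsequence $v_{s_j} \to \bar v$ by Bolzano--Weierstrass. Since $H$ is continuous, being nonexpansive, and $v_s - H(v_s)\to 0$, passing to the limit gives $\bar v = H(\bar v)$, so $\bar v$ is a fixed point of $H$. Finally, applying the Fej\'er estimate with the particular fixed point $v^* = \bar v$ shows that $\{\|v_s - \bar v\|\}$ is monotonically non-increasing; since it has a subsequence tending to $0$, the whole sequence tends to $0$, i.e. $v_s \to \bar v$, a fixed point of $H$.

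The main obstacle, and the reason the averaging is essential, is establishing the asymptotic regularity $v_s - H(v_s) \to 0$; without the factor $\ka(1-\ka)$ arising from a strict convex combination (as opposed to the plain Picard iterates of $H$, corresponding to $\ka=0$), the residual need not vanish, as the example $H=-I$ shows. Everything else is bookkeeping built on the single quadratic identity together with finite-dimensional compactness.
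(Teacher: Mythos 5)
Your proposal is correct and complete, and it is worth noting that the paper itself contains no proof of this statement: the theorem is imported with a citation to Opial's 1967 paper, whose original setting is an infinite-dimensional Hilbert space and whose conclusion there is \emph{weak} convergence of the averaged iterates, established via weak sequential compactness and a demiclosedness argument for $I-H$ (Opial's lemma). Your argument replaces that machinery with what finite dimensions afford for free: the convex-combination identity gives the Fej\'er estimate $\|v_{s+1}-v^*\|^2 \leq \|v_s-v^*\|^2 - \ka(1-\ka)\|v_s-H(v_s)\|^2$, telescoping gives asymptotic regularity, Bolzano--Weierstrass plus continuity of $H$ identifies a subsequential limit as a fixed point, and Fej\'er monotonicity with respect to that particular fixed point upgrades subsequential to full convergence. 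This is the standard Krasnoselskii--Mann analysis specialized to $\R^d$, and every step checks out, including the preliminary observation that $H_\ka$ and $H$ share fixed points (so the limit is indeed a fixed point of $H$, as the theorem asserts) and the $H=-I$ example showing that the strict averaging $\ka\in(0,1)$ cannot be dispensed with. What your route buys is a self-contained, elementary proof exactly adapted to the finite-dimensional statement actually used by the paper's Picard--Opial scheme; what the cited route buys is generality (infinite-dimensional Hilbert spaces, and variable averaging coefficients), which the paper does not need.
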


The required proximity operator of $\reg$ is directly given, 
for every $(s,t) \in \R^n \times \R^k$, by 
$$
\prox_\reg(s,t) =
\left(\prox_{\reg_1}(s),\prox_{\reg_2}(t)\right).
$$
Both $\prox_{\reg_1}$ and $\prox_{\reg_2}$ can be easily computed.
The latter requires computing the projection on the set $S$. 
The former requires, for each component of the vector $s \in \R^n$, the
solution of a cubic equation as stated in the next lemma.

\begin{lemma}
For every $\mu,\alpha \in \R$ and $r,\rho > 0$, the function 
$h: \R_+ \rightarrow \R$ defined at $s$ 
as $h(s) := (s-\mu)^2 + r \left(\frac{\alpha^2}{s+\rho} + s\right)$ 
has a unique minimum on its domain, which is attained at $(x_0-\rho)_+$, 
where $x_0$ is the largest real root of the polynomial 
$2x^3 +(r-2(\mu+\rho))x^2 - r \alpha^2$.
\vspace{-1ex}
\label{lem:cubic}
\end{lemma}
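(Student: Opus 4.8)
The plan is to analyze the function $h(s) = (s-\mu)^2 + r\left(\frac{\alpha^2}{s+\rho} + s\right)$ on its domain $\R_+$ by reducing the first-order optimality condition to the stated cubic and then arguing that the largest root gives the global minimizer. First I would compute the derivative. Differentiating term by term gives
\[
h'(s) = 2(s-\mu) + r\left(1 - \frac{\alpha^2}{(s+\rho)^2}\right).
\]
Setting $h'(s)=0$ and clearing the denominator $(s+\rho)^2 > 0$ (valid since $s \geq 0$ and $\rho > 0$) yields
\[
\bigl(2(s-\mu)+r\bigr)(s+\rho)^2 - r\alpha^2 = 0.
\]
Expanding this in powers of $s$ and substituting $x = s+\rho$ (a shift that turns the critical-point equation into the displayed polynomial), I expect to recover exactly $2x^3 + (r - 2(\mu+\rho))x^2 - r\alpha^2$. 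This is a routine algebraic verification, so I would not grind through it, but the key point is that critical points of $h$ in the interior correspond via $x = s+\rho > \rho$ to real roots of the cubic.

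Next I would establish existence and uniqueness of the minimizer on $\R_+$. The function $h$ is continuous on $[0,\infty)$, and since $\alpha^2/(s+\rho) \to 0$ while the quadratic term $(s-\mu)^2$ dominates, $h(s) \to +\infty$ as $s \to \infty$; hence a global minimum on the closed half-line exists. For uniqueness I would show $h$ is strictly convex on $\R_+$: its second derivative is
\[
h''(s) = 2 + \frac{2r\alpha^2}{(s+\rho)^3} > 0
\]
for all $s \geq 0$, since $r,\rho > 0$ and $(s+\rho)^3 > 0$. Strict convexity guarantees the minimizer is unique, and it is either the unique interior stationary point or the boundary point $s=0$.

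The remaining and most delicate step is to connect the minimizer to the \emph{largest} real root $x_0$ of the cubic and to justify the formula $(x_0 - \rho)_+$. Let $P(x) = 2x^3 + (r-2(\mu+\rho))x^2 - r\alpha^2$. Observe $P(0) = -r\alpha^2 \leq 0$, and $P(x) \to +\infty$ as $x \to \infty$, so $P$ has at least one positive root, and its largest real root $x_0$ satisfies $x_0 > 0$ (strictly, when $\alpha \neq 0$; the degenerate case $\alpha=0$ I would handle separately, where $h$ reduces to $(s-\mu)^2 + rs$ with minimizer $(\mu - r/2)_+$). The candidate interior minimizer is $s_0 = x_0 - \rho$. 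The clearing-denominator step I would have to do carefully, because multiplying $h'(s)=0$ by $(s+\rho)^2$ can introduce spurious roots; since the factor is strictly positive this does not actually happen, so every root of the cubic with $x > \rho$ is a genuine critical point of $h$ and conversely. I expect the hard part to be arguing that it is specifically the \emph{largest} root, rather than a smaller one, that corresponds to the minimizer. The cleanest way is to note that $P'(x) = h''(s)(s+\rho)^2/\text{(positive factor)}$ shares sign information with $h''$, or more directly to use strict convexity of $h$: a strictly convex function has at most one interior critical point, so among the roots of the cubic at most one can exceed $\rho$, and that root—being the largest consistent with $x>\rho$—is the minimizer. Finally I would take the positive part: if $x_0 - \rho \geq 0$ this is the interior minimizer; if $x_0 - \rho < 0$, then by strict convexity $h$ is increasing on $\R_+$ and the minimum is attained at the boundary $s=0$, which is precisely $(x_0-\rho)_+$. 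Assembling existence, strict convexity, the cubic identity, and this boundary analysis completes the proof.
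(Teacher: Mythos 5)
Your proof is correct and follows essentially the same route as the paper's: set $h'(s)=0$, substitute $x=s+\rho$ to obtain the stated cubic, and use strict convexity of $h$ together with its growth at infinity to conclude that the unique minimizer is $x_0-\rho$ when $x_0>\rho$ and $0$ otherwise. The paper's proof is just a terser version of this; your added details (the explicit second derivative, the argument that at most one root of the cubic can exceed $\rho$ and must then be the largest, and the boundary case) are exactly the steps the paper leaves implicit.
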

\begin{proof}
Setting the derivative of $h$ equal to zero and making the 
change of variable $x=s+\rho$ yields the polynomial stated 
in the lemma. Let $x_0$ be the largest root of this polynomial.
Since the function $h$ is strictly convex on its domain and grows at infinity, 
its minimum can be attained only at one point, which is $x_0-\rho$, if $x_0>\rho$, and zero otherwise.
\end{proof}

\subsection{Accelerated Proximal Method}
\label{sec:fista}

Theorem \ref{thm:fixed} motivates a proximal numerical approach to solving problem \eqref{eq:primal} and, in turn, problem \eqref{eq:method}.  Let
$E(\beta) = \frac{1}{2}\|X\beta-y\|^2_2$ and assume an upper bound $L$
of $\|X\trans X\|$ is known\footnote{For variants of such algorithms
which adaptively learn $L$, see the above references.}.  Proximal
first-order methods -- see \cite{combettes,fista,Nesterov07,tseng08} and
references therein -- can be used for nonsmooth optimization, where
the objective consists of a strongly smooth term, plus a nonsmooth
part, in our case $E$ and $\Gamma+\delta_\Lambda$, respectively. The
idea is to replace $E$ with its linear approximation around a point
$w_t$ specific to iteration $t$.  
This leads to the computation of a proximity operator, and specifically in our case
to $u_t := (\beta_t, \lambda_t) \leftarrow 
\argmin \bigl\{ \frac{L}{2} \|(\beta,\lam) - (w_t -  \frac{1}{L}\nabla E(w_t))\|^2_2 + 
\rho \,\Gamma(\beta,\lam) : \beta \in \R^n, \lam \in \Lambda \bigr\}$. 
Subsequently, the point $w_t$ is updated, based on the
current and previous estimates of the solution $u_t, u_{t-1}, \dots$
and the process repeats.\vspace{-0.5ex}

\vspace{-0.5ex}The simplest update rule is $w_{t}=u_{t}$.
By contrast, {\em accelerated proximal methods} proposed by
\cite{Nesterov07} use a carefully chosen $w$ update with two levels of
memory, $u_t, u_{t-1}$.
If the proximity map can be exactly computed, such schemes 
exhibit a fast quadratic decay in terms of the
iteration count, that is, the distance of the objective from the
minimal value is $O\left(\frac{1}{T^2}\right)$ after $T$ iterations.
However, it is not known whether accelerated methods 
which compute the proximity operator {\em numerically} can achieve this rate.
The main advantages of accelerated methods are their 
low cost per iteration and their scalability to large problem sizes.
Moreover, in applications where a thresholding operation
is involved -- as in Lemma \ref{lem:cubic} --
the zeros in the solution are exact.

\begin{algorithm}
\caption{NEsterov PIcard-Opial algorithm (NEPIO)}
\begin{algorithmic}
\STATE $u_1,w_1 \leftarrow$ arbitrary feasible values
\FOR {t=1,2,\dots}
\STATE Compute a fixed point ${\hat v^{(t)}}$ of $H_t$ by Picard-Opial 
\STATE $u_{t+1} \leftarrow w_t - \frac{1}{L}\nabla E(w_t)
-\frac{c}{L} B\trans {\hat v^{(t)}}$
\STATE $w_{t+1} \leftarrow \pi_{t+1} u_{t+1} - (\pi_{t+1}-1)u_t$
\ENDFOR 
\end{algorithmic}
\label{alg:acc}
\end{algorithm}

For our purposes, we use a version of accelerated methods
influenced by \cite{tseng08} (described in Algorithm \ref{alg:acc} and called {\em NEPIO}). 
According to Nesterov \cite{Nesterov07}, the optimal update is 
$w_{t+1} \leftarrow u_{t+1} + \theta_{t+1}\left(\frac{1}{\theta_t}-1\right) 
(u_{t+1}-u_t)$
where the sequence $\theta_t$ is defined by $\theta_1 = 1$ 
and the recursion
\beq
\frac{1-\theta_{t+1}}{\theta_{t+1}^2} = \frac{1}{\theta_t^2} \,.
\label{eq:theta}
\eeq
We have adapted \cite[Algorithm 2]{tseng08} 
(equivalent to FISTA \cite{fista}) by computing the proximity operator of
$\frac{\reg}{L}\circ B$ using the Picard-Opial process described in
Section \ref{sec:prox_map}.
We rephrased the algorithm using the sequence 
$\pi_t := 1-\theta_t + \sqrt{1-\theta_t}
= 1-\theta_t +\frac{\theta_t}{\theta_{t-1}}$ for numerical 
stability.
At each iteration, the map $H_t$ is defined by 
$$
H_t(v) := \left(I-\prox_{\frac{\phi}{c}}\right) \Bigg(
\left(I-\frac{c}{L} BB\trans\right) v - \frac{1}{L}B \Big(\nabla E(w_t) 
- Lw_t\Big) \Bigg)\,.
$$
We also apply an Opial averaging so that the
update at stage $s$ of the proximity computation 
is $v_{s+1} = \kappa v_s + (1-\kappa) H_t(v_s)$.
By Theorem \ref{thm:fixed}, the fixed
point process combined with the assignment of $u$ are equivalent to $u_{t+1}
\leftarrow \prox_{\frac{\reg}{L}\circ B}\left(w_t -
  \frac{1}{L}\nabla E(w_t)\right)$.

The reason for resorting to Picard-Opial is that
exact computation of the proximity operator
\eqref{eq:sim} is possible only in simple
special cases for the set $\Lambda$.
By contrast, our approach can be applied to a
wide variety of constraints.
Moreover, we are not aware of another proximal
method for solving problems \eqref{eq:primal} or 
\eqref{eq:method} 
and alternatives like interior point methods
do not scale well with problem size.
In the next section, we will demonstrate
empirically the scalability of Algorithm \ref{alg:acc},
as well as the efficiency of both the proximity
map computation and the overall method.

\section{Numerical Simulations}
\label{sec:5}

In this section, we present experiments with method \eqref{eq:primal}. 
The goal of the experiments is to both study the computational and the
statistical estimation properties of this method. 
One important aim of the experiments is to demonstrate that the method is statistically competitive 
or superior to state-of-the-art methods while being computationally efficient. 
The methods employed are the Lasso, StructOMP \cite{huang2009} and method \eqref{eq:primal} 
with the following choices for the constraint set $\Lambda$: {\em Grid-C}, $\La_\alpha = \{\lam:\|A\lam\|_1\leq \alpha\}$, where $A$ is the edge map of  
a 1D or 2D grid and $\alpha > 0$, and {\em Tree-C}, $\La=\{\lam: A\lam \geq 0\}$, where $A$ is the edge map of a tree graph.

We solved the optimization problem \eqref{eq:primal} either with the toolbox CVX\footnote{\tt http://cvxr.com/cvx/} or with the proximal method presented in Section
\ref{sec:prox}. 
When using the proximal method, we chose the parameter from Opial's Theorem $\kappa=0.2$ and we stopped the iterations when the relative decrease in the  objective value is less than $10^{-8}$. 
For the computation of the proximity operator, we stopped the iterations of the Picard-Opial method when the relative difference between two consecutive iterates is smaller than $10^{-2}$. 
We studied the effect of varying this tolerance in the next experiments.
We used  the square loss and computed the Lipschitz constant $L$ using singular value decomposition. 

\begin{figure}[!ht]
\begin{center}
  \begin{tabular}{ccc}
\includegraphics[width=0.32\textwidth]{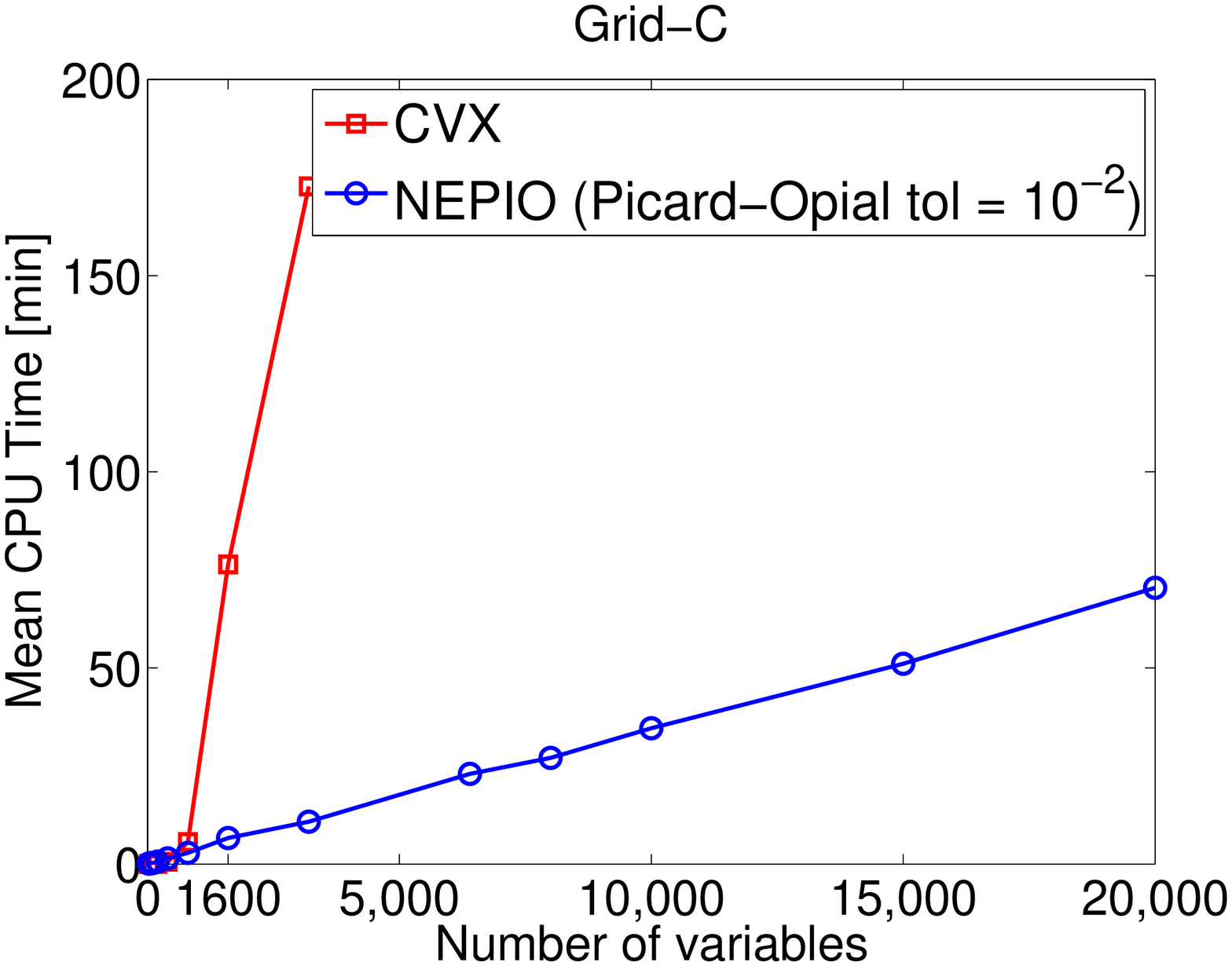} &
\includegraphics[width=0.32\textwidth]{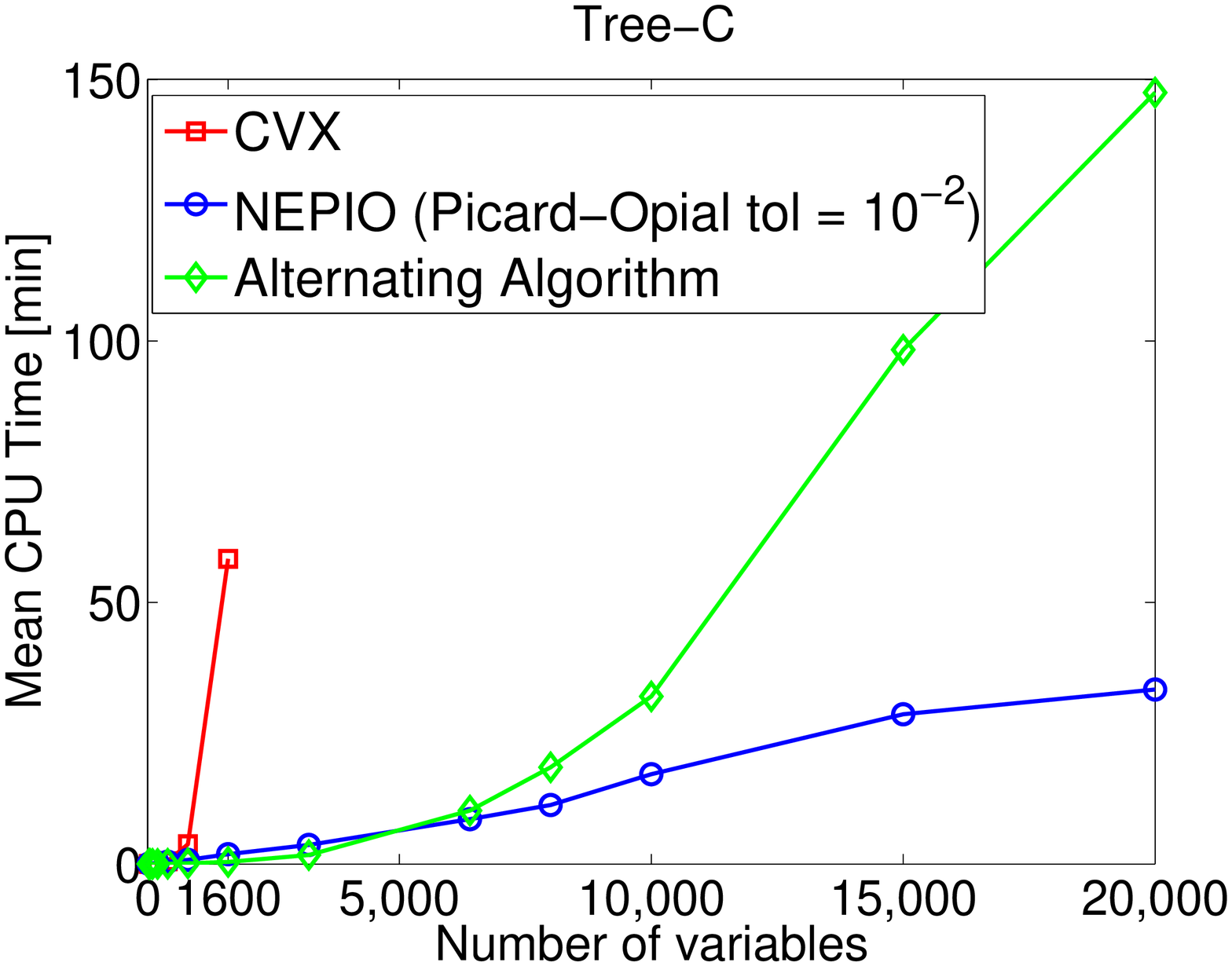} &
\includegraphics[width=0.32\textwidth]{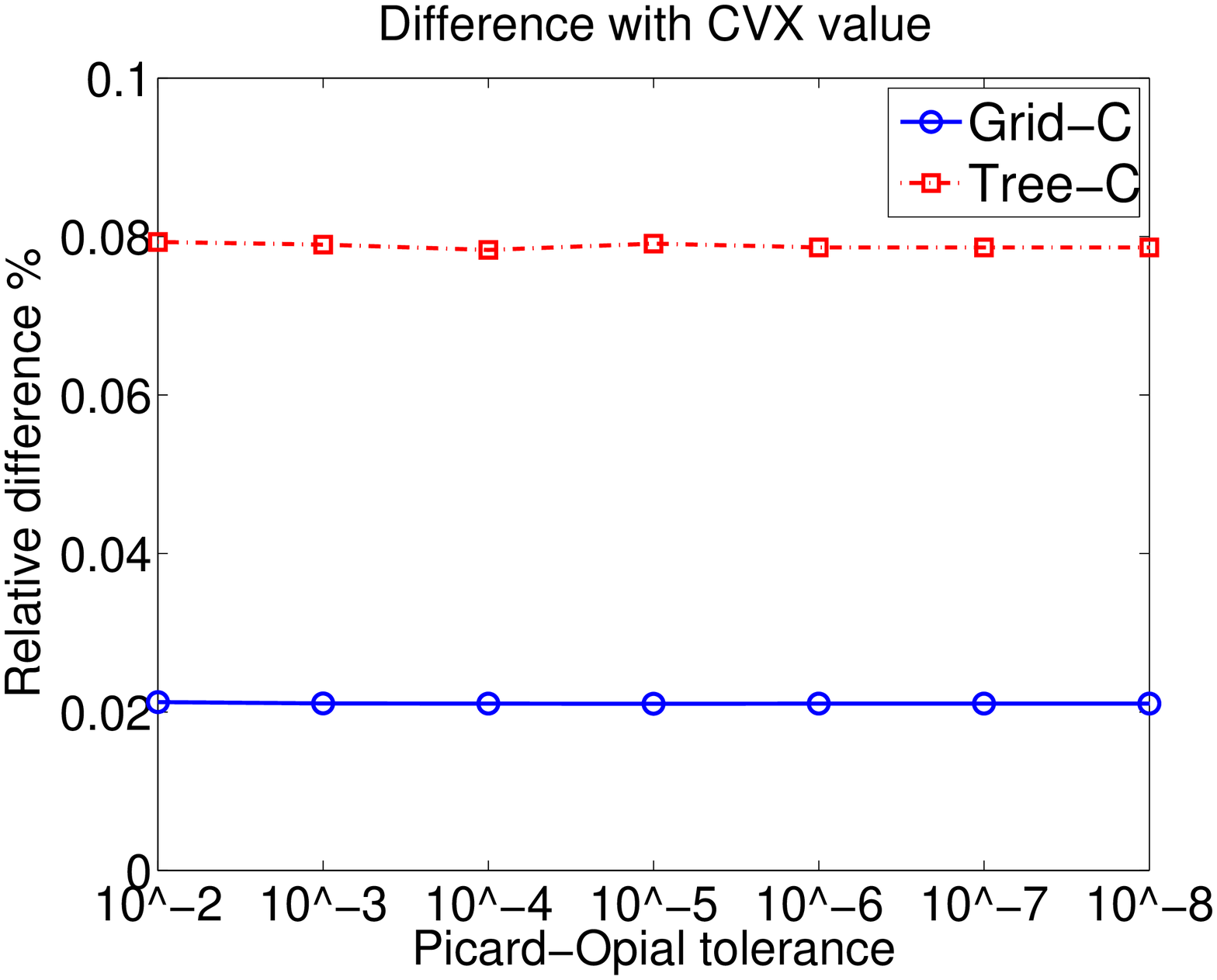}

\\
  \end{tabular}
  \caption{(Left and Centre) Computation time vs problem size for Grid-C and Tree-C. (Right) Difference with the solution obtained via CVX vs Picard-Opial tolerance.}
  \label{fig:cost}
\end{center}
\end{figure}

\subsection{Efficiency experiments}
First, we investigated the computational properties of the proximal method.  
Our aim in these experiments was to show that our algorithm has a time
complexity that scales linearly with the number of variables, while
the sparsity and relative number of training examples is kept constant. We
considered both the Grid and the Tree constraints and compared our
algorithm to the toolbox CVX, which is an interior-point method
solver. As is commonly known, interior-point methods are very fast,
but do not scale well with the problem size. In the case of the Tree
constraint, we also compared with a modified version of the
alternating algorithm of \cite{MMP-10}. For each problem size, we
repeated the experiments $10$ times and we report the average
computation time in Figure \ref{fig:cost}-{\em(Left and Centre)} for Grid-C
and Tree-C, respectively. This result indicates that our method is
suitable for large scale experiments.

We also studied the importance of the Picard-Opial tolerance for converging to a good solution. In Figure \ref{fig:cost}-{\em Right}, we report the average relative distance to the solution obtained via CVX for different values of the Picard-Opial tolerance. We considered a problem with $100$ variables and repeated the experiment $10$ times with different sampling of training examples, considering both the Grid and the Tree constraint.
It is clear that decreasing the tolerance did not bring any advantage in terms of converging to a better solution, while it remarkably increased the computational overhead, passing from an average of $5s$ for a tolerance of $10^{-2}$ to $40s$ for $10^{-8}$ in the case of the Grid constraint.

Finally, we considered the 2D Grid-C case and observed that the number of Picard-Opial iterations needed to reach a tolerance of $10^{-2}$, scales well with the number of variables $n$. 
For example when $n$ varies between $200$ and $6400$, the average number of iterations varied between $20$ and $40$.




\subsection{Statistical experiments}

\noindent {\bf One dimensional contiguous regions.} In this experiment, we chose a model vector $\beta^\ast\in \R^{200}$ with $20$ nonzero elements, whose values are random $\pm 1$. We considered sparsity patterns forming one, two, three or four 
non-overlapping contiguous regions, which have lengths of $20$, $10$, $7$ or $5$, respectively. 
We generated a noiseless output from a matrix $X$
whose elements have a standard Gaussian distribution.
 The estimates $\hat{\beta}$ for several models are then
compared with the original. 
Figure \ref{fig:1} shows the model error $\frac{\lVert\ \hat{\beta} -
\beta^\ast \lVert_2}{\lVert\ \beta^\ast \lVert_2}$ as the 
sample size changes from $22$ (barely above the sparsity) up to $100$
(half the dimensionality). This is the average over $50$ runs, each
with a different $\beta^\ast$ and $X$. 
We observe that Grid-C outperforms both Lasso and StructOMP, whose performance deteriorates as the number of regions is increased.
For one particular run with a sample size which is twice the model sparsity, Figure \ref{fig:2} shows the original vector and the estimates for different methods. 

\begin{figure}
\begin{center}
  \begin{tabular}{cc}
     \includegraphics[width=0.233\textwidth]{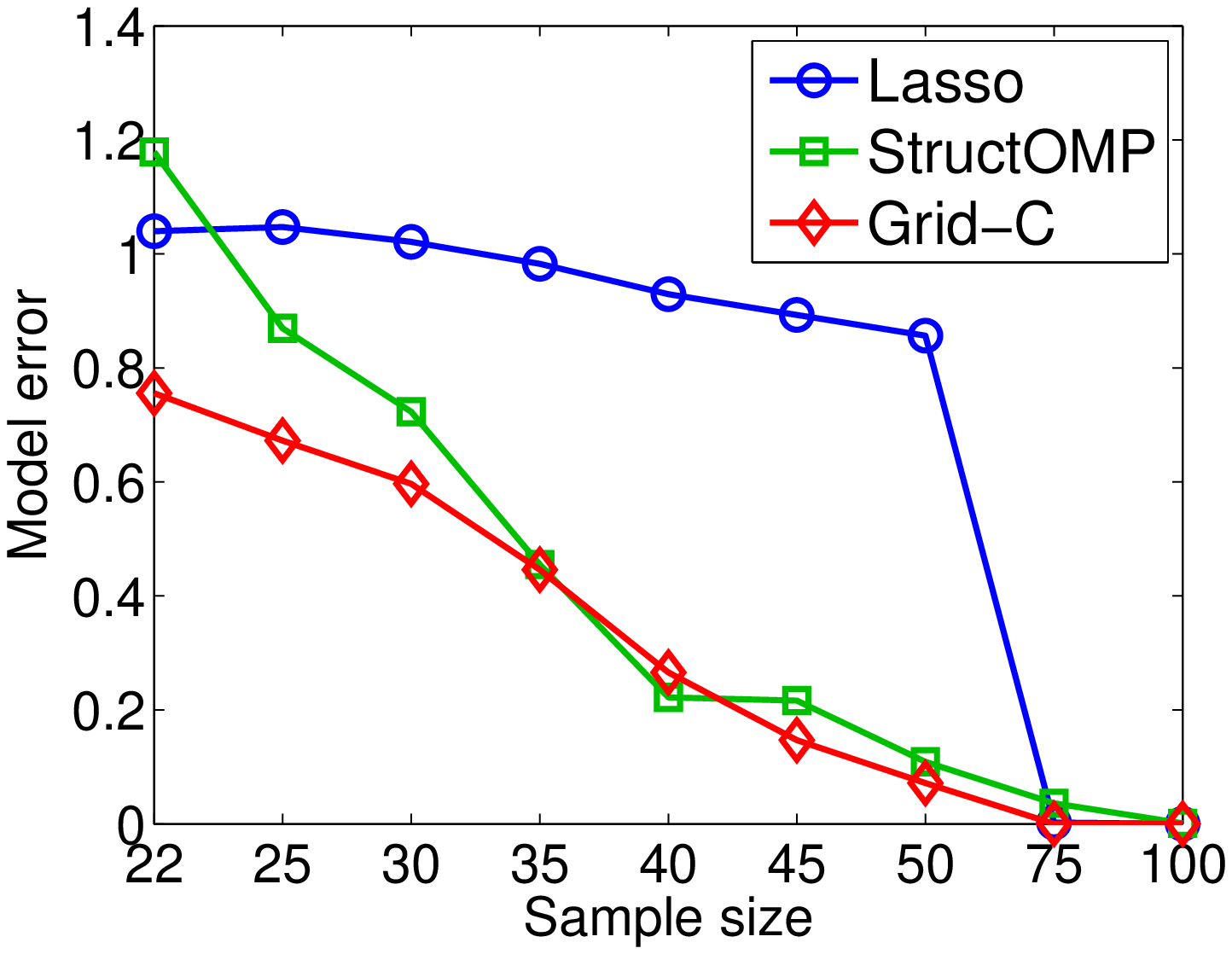} 
     \includegraphics[width=0.233\textwidth]{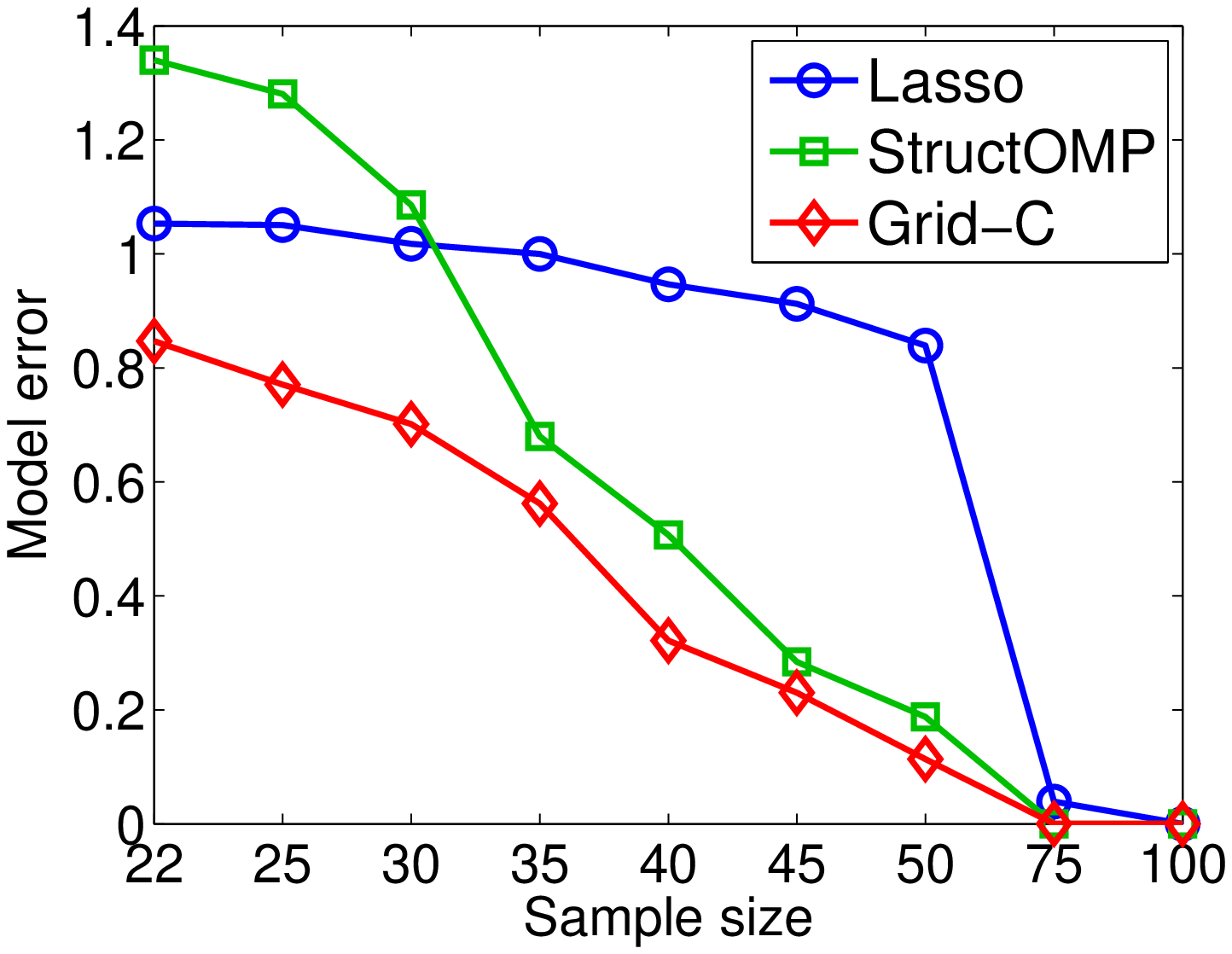} 
     \includegraphics[width=0.233\textwidth]{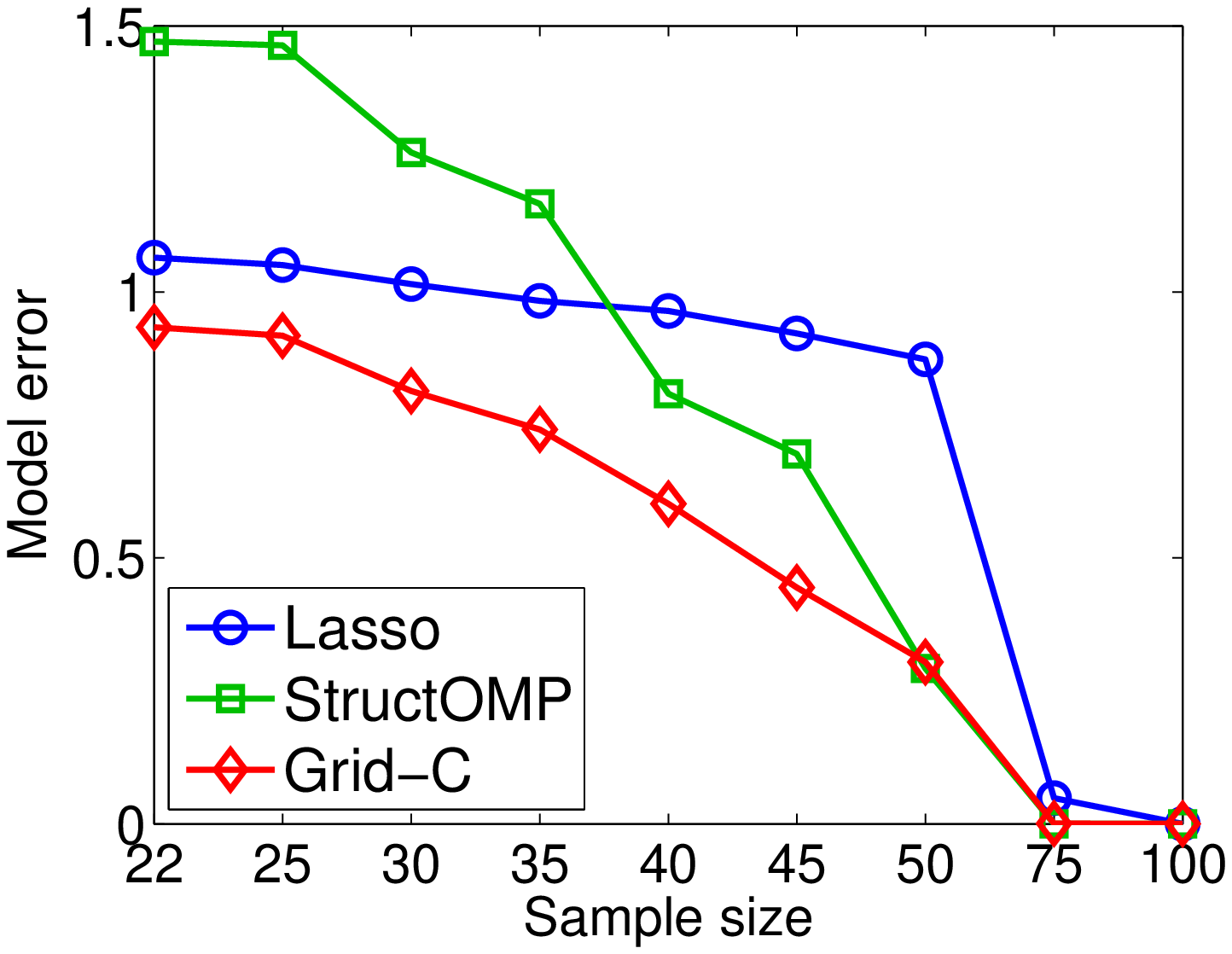} 
     \includegraphics[width=0.233\textwidth]{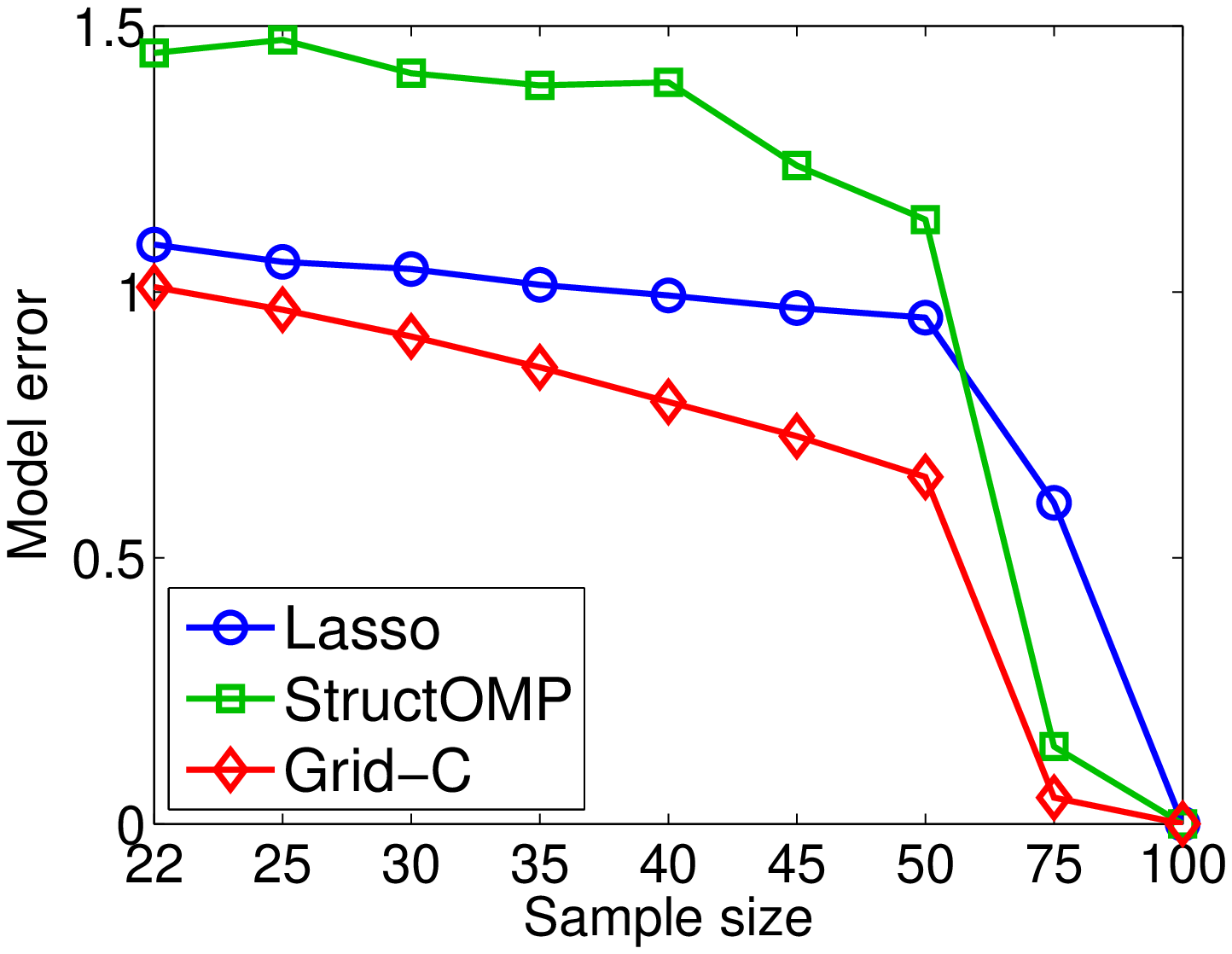} 
  \end{tabular}
  \caption {$1D$ contiguous regions: comparison between different
    methods for one (left), two (centre-left), three (centre-right) and four (right) regions.}
  \label{fig:1}
\end{center}
\end{figure}

\noindent {\bf Two dimensional contiguous regions.} 
We repeated the experiment in the case that the sparsity
pattern of $\beta^*\in \R^{20\times20}$ consists of $2D$ rectangular regions. 
We considered either a single $5\times5$ region,
two regions ($4\times4$ and $3\times3$), three $3\times3$ regions and
four $3\times2$ regions.  Figure \ref{fig:3} shows the model error versus
the sample size in this case. Figure \ref{fig:4} shows the original image 
and the images estimated by different methods for a sample size which
is twice the model sparsity.  Note that Grid-C is superior to both the
Lasso and StructOMP and that StructOMP is outperformed by Lasso when
the number of regions is more than two. This behavior 
is consistently confirmed by experiments in
higher dimensions, not shown here for brevity.

\begin{figure}
\begin{center}
  \begin{tabular}{cccc}
\includegraphics[width=0.23\textwidth,height = 1.8cm]{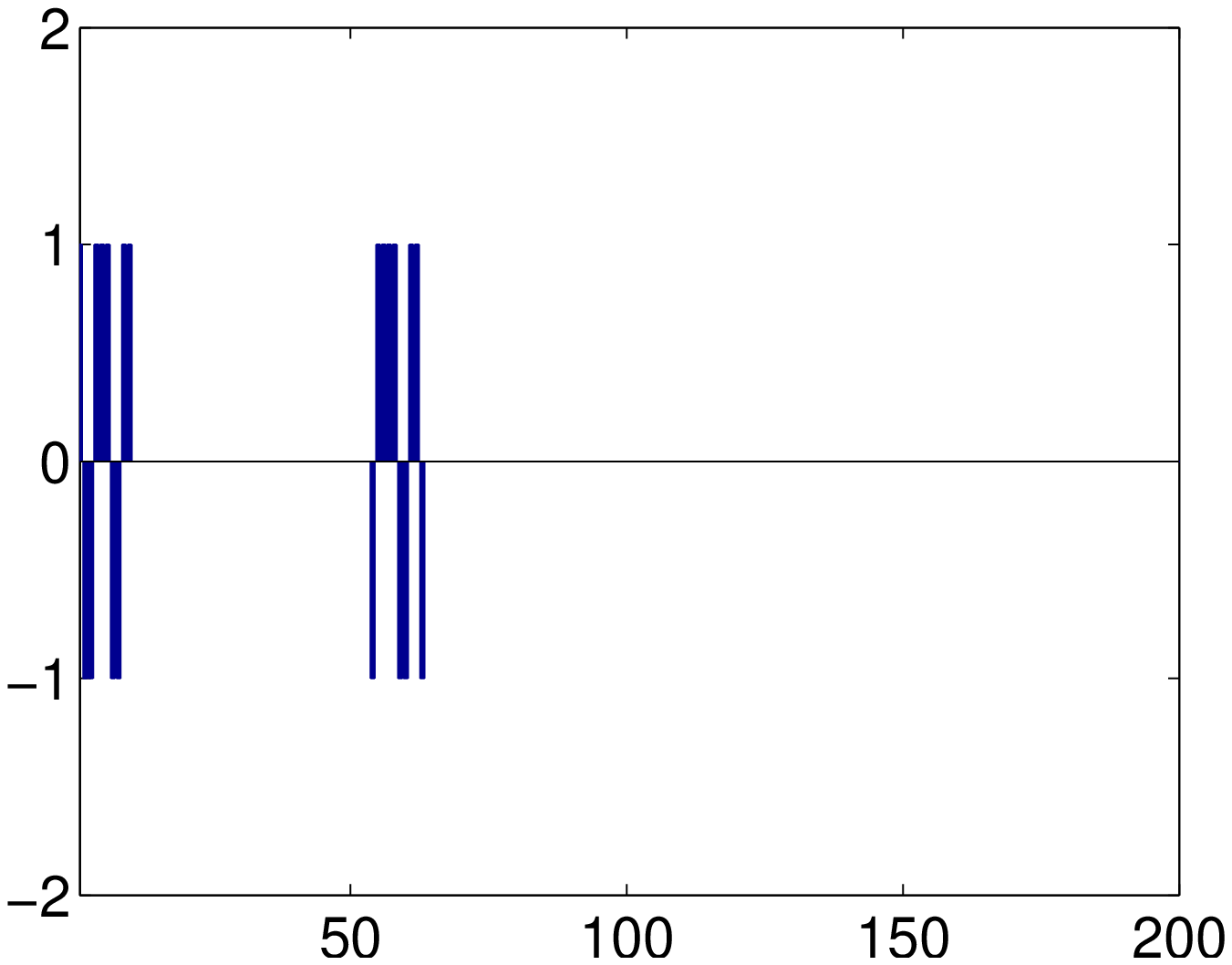} &
\includegraphics[width=0.23\textwidth,height = 1.8cm]{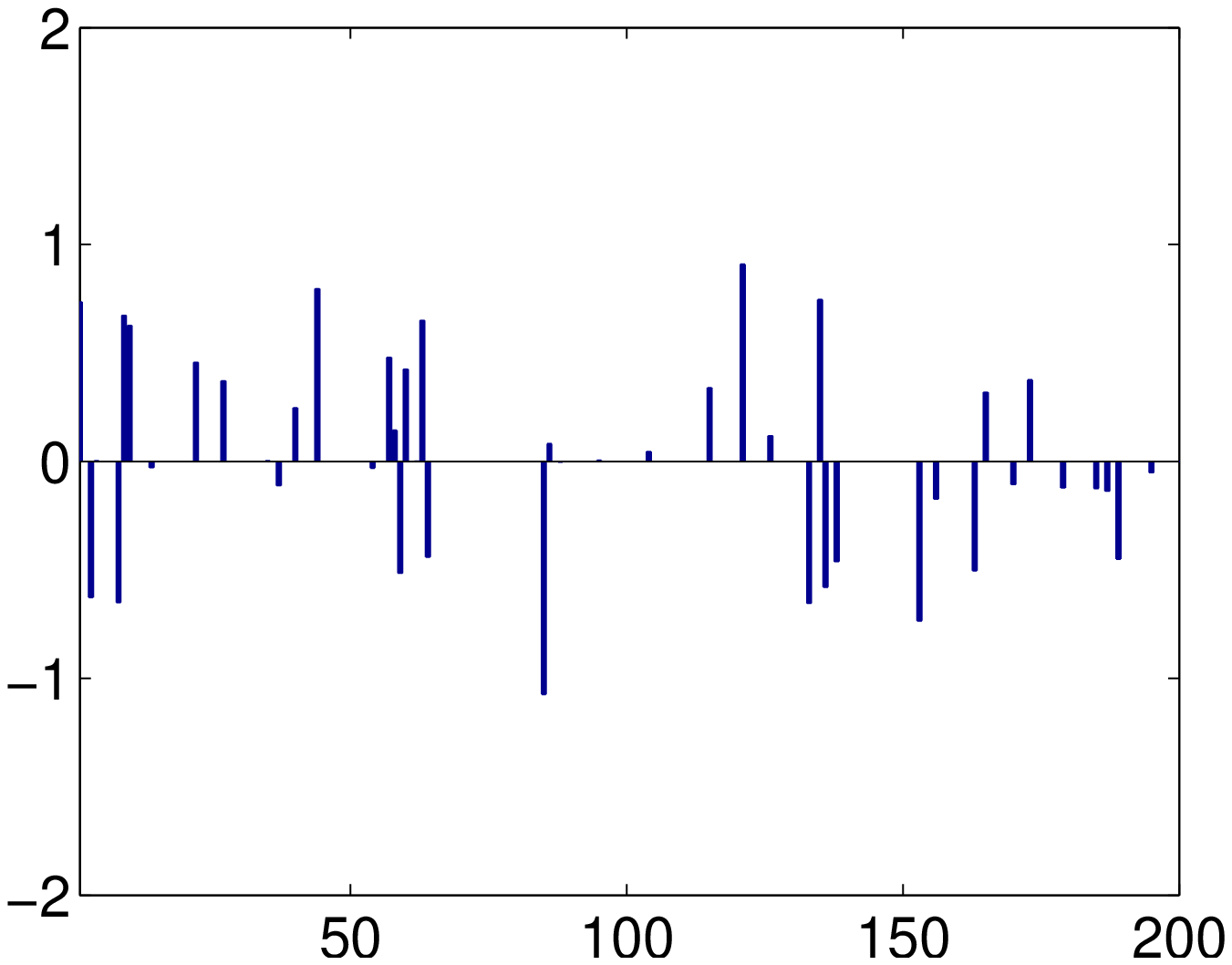} &
\includegraphics[width=0.23\textwidth,height = 1.8cm]{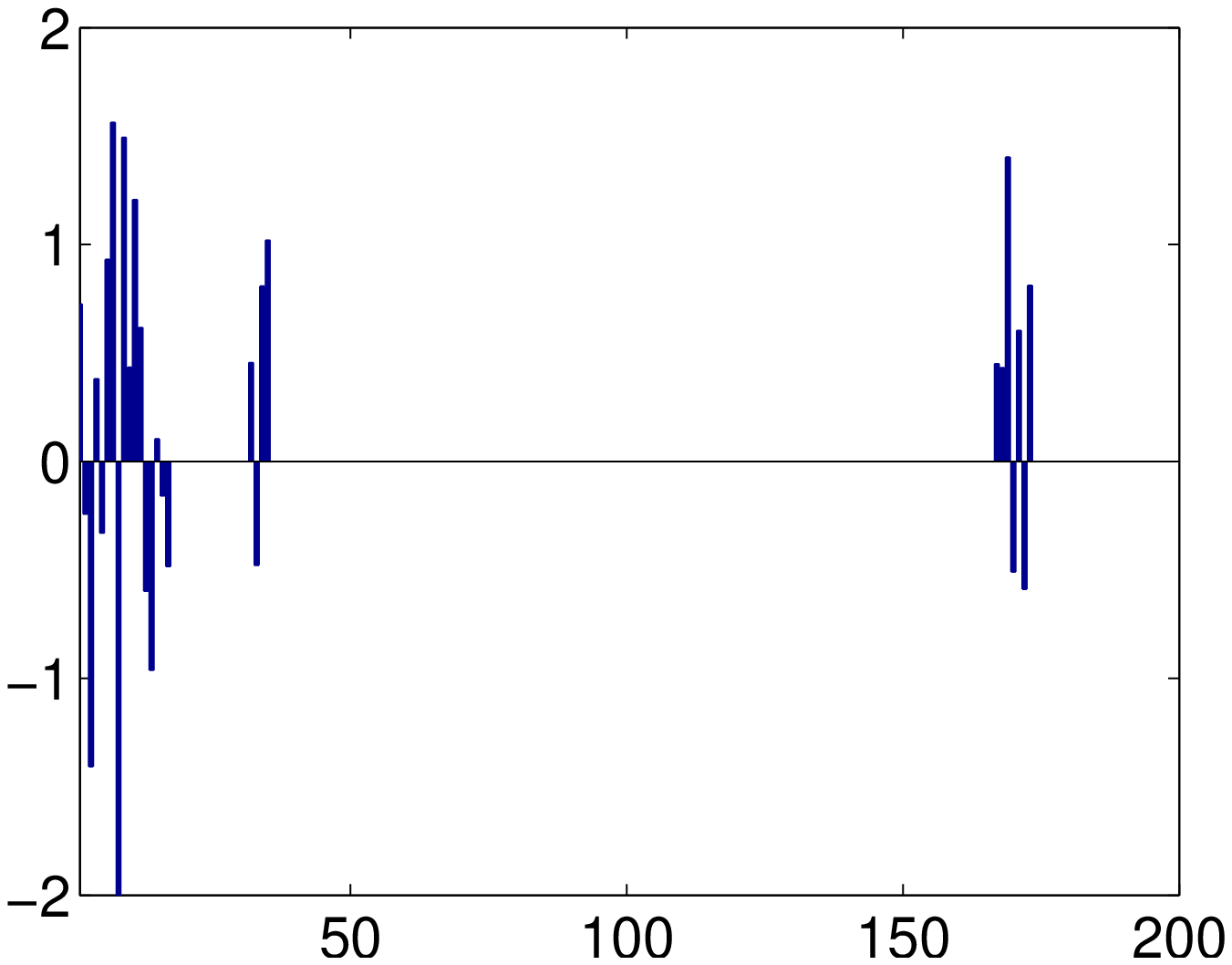} &
\includegraphics[width=0.23\textwidth,height = 1.8cm]{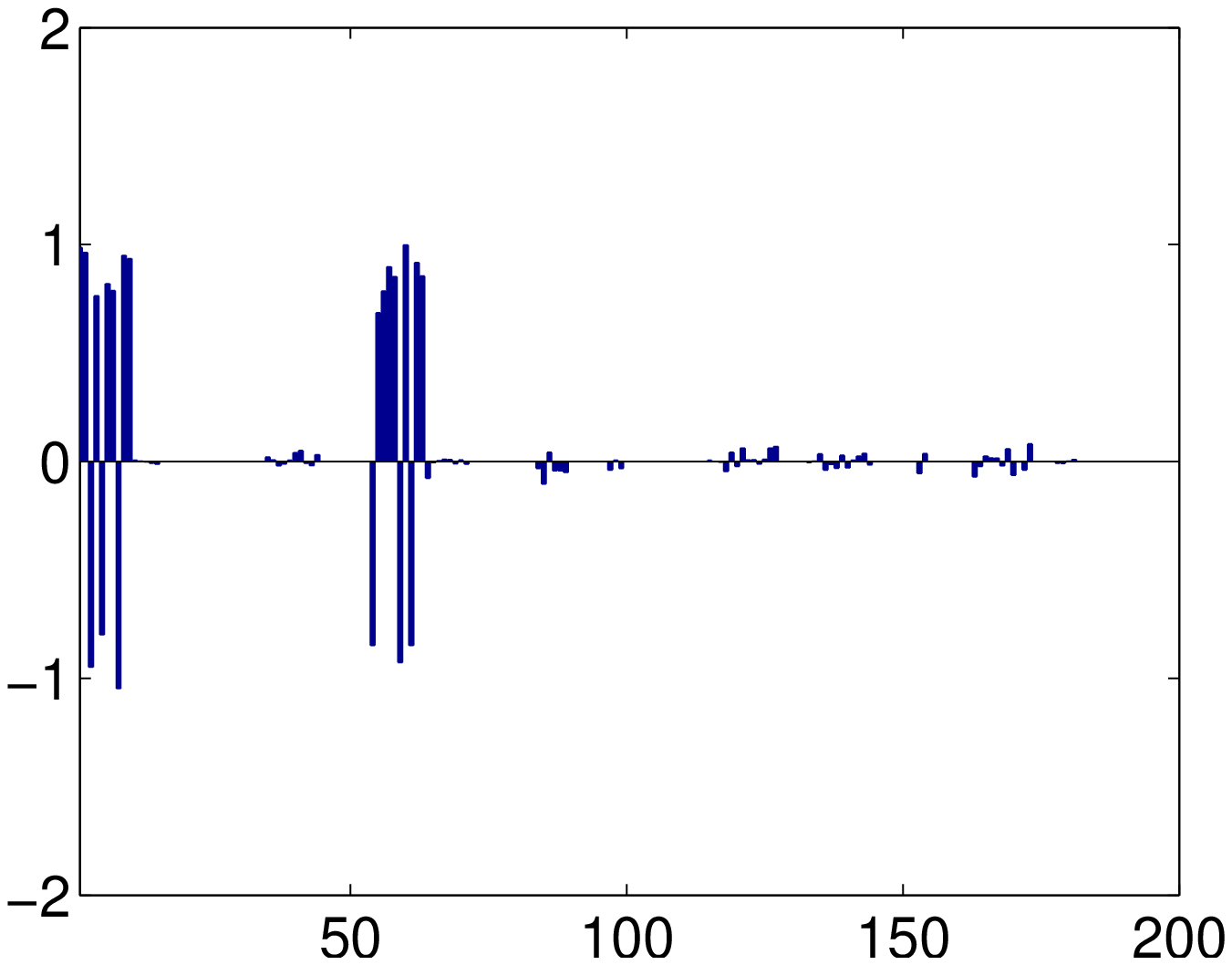}
  \end{tabular}
  \caption {Two $1D$ contiguous regions: regression vector estimated
    by different models: $\beta^*$ (left), Lasso (centre-left),
    StructOMP (centre-right), Grid-C (right).}
  \label{fig:2}
\end{center}
\end{figure}

\begin{figure}
\begin{center}

  \begin{tabular}{cc}
     \includegraphics[width=0.233\textwidth]{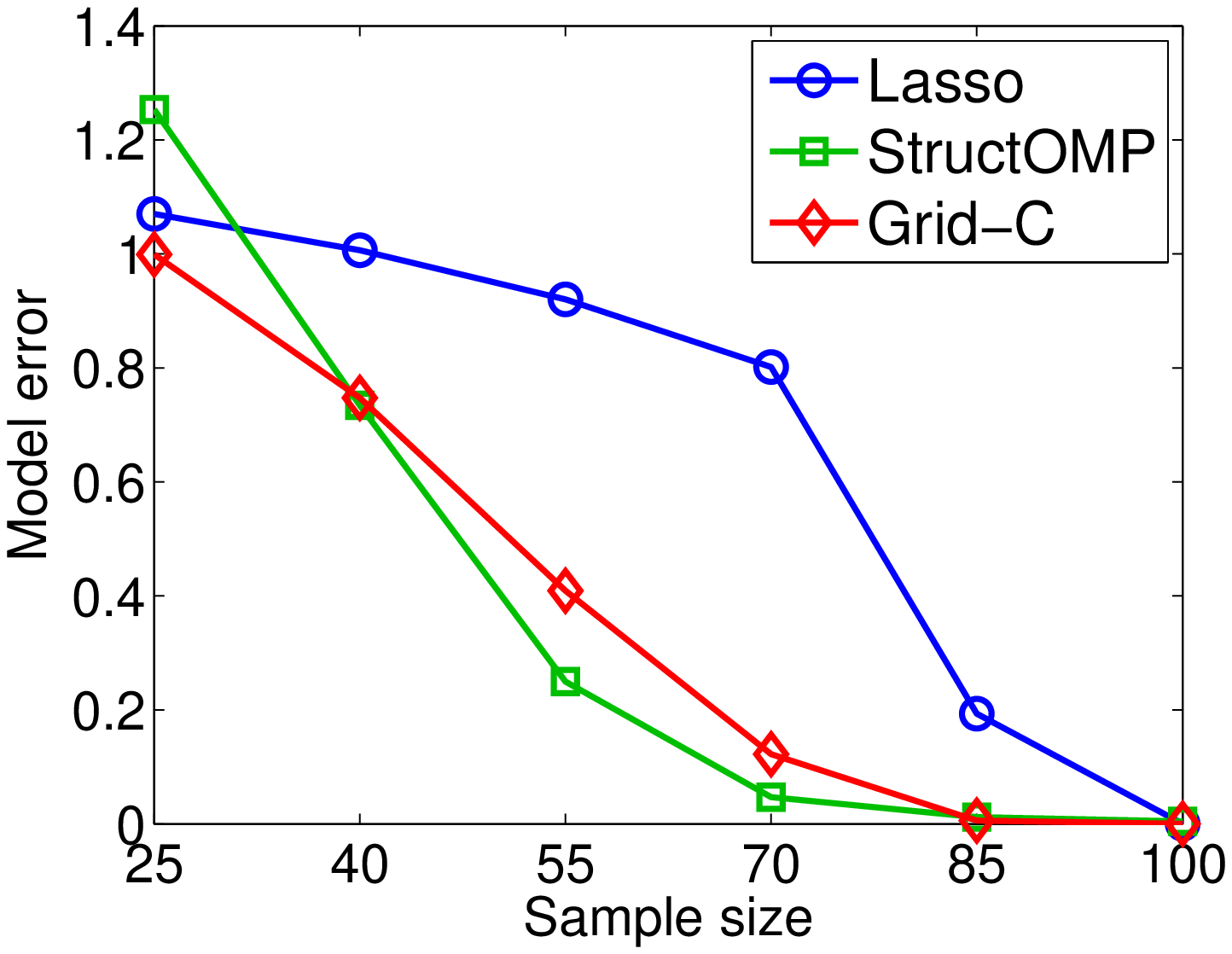} 
     \includegraphics[width=0.233\textwidth]{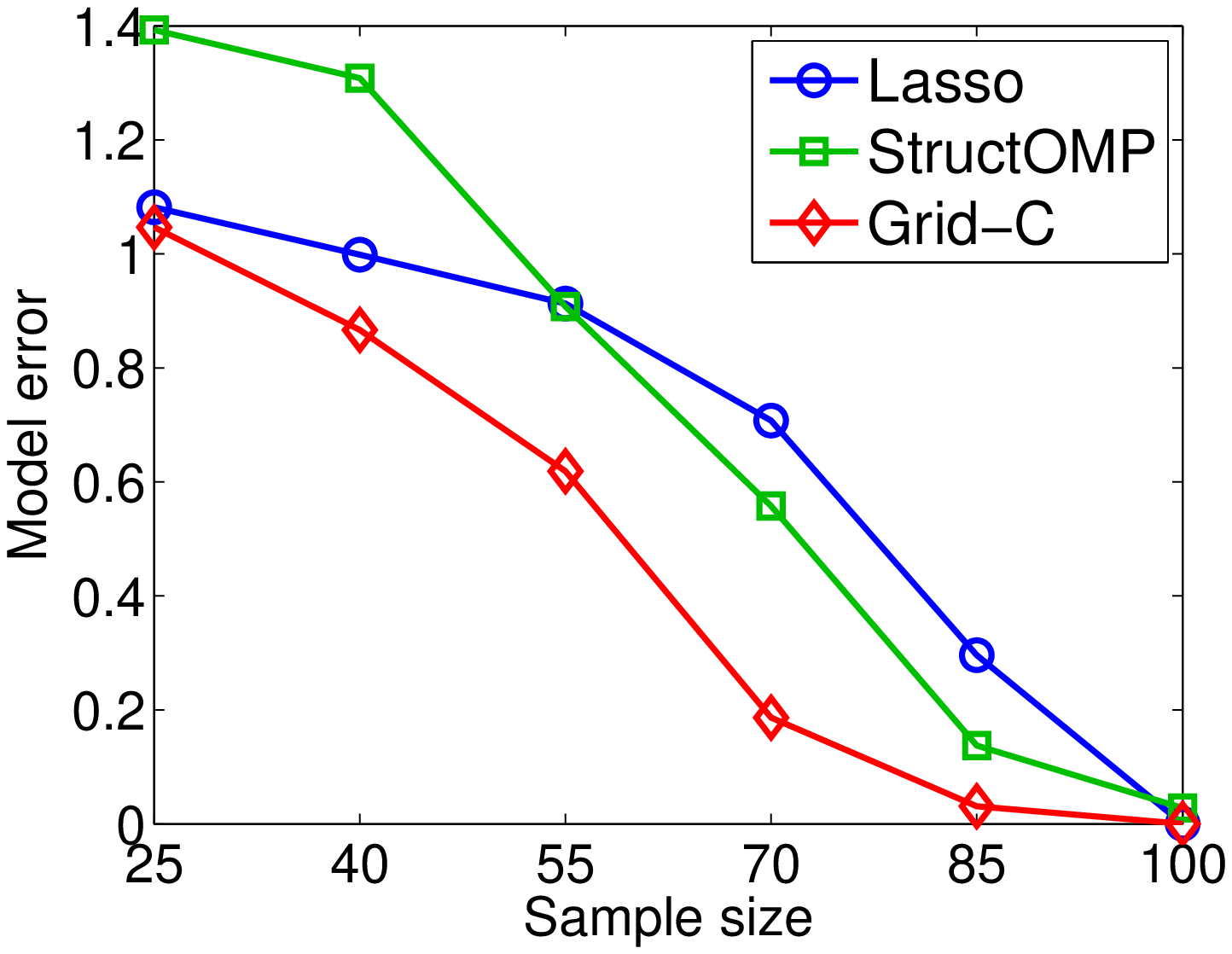} 
     \includegraphics[width=0.233\textwidth]{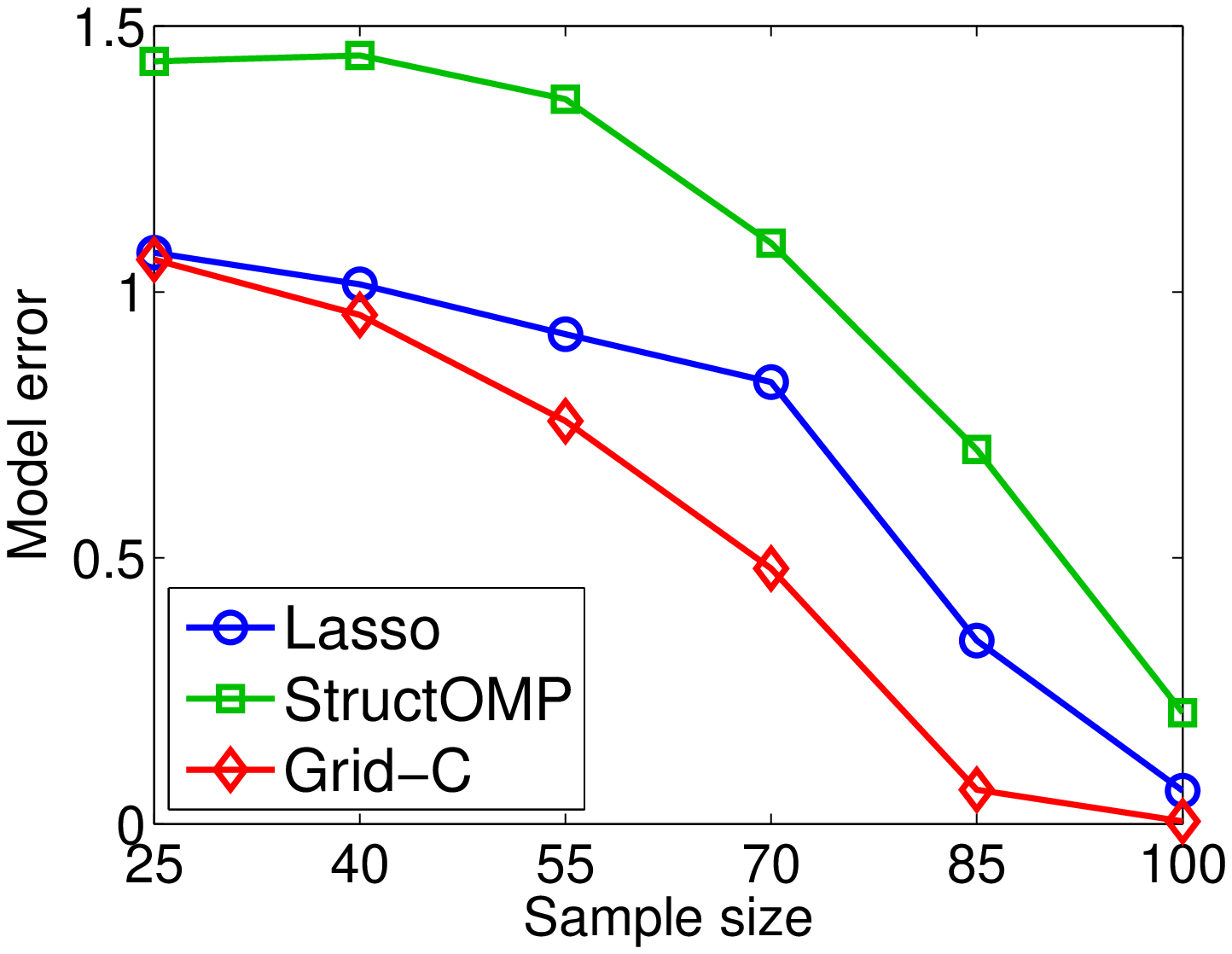} 
     \includegraphics[width=0.233\textwidth]{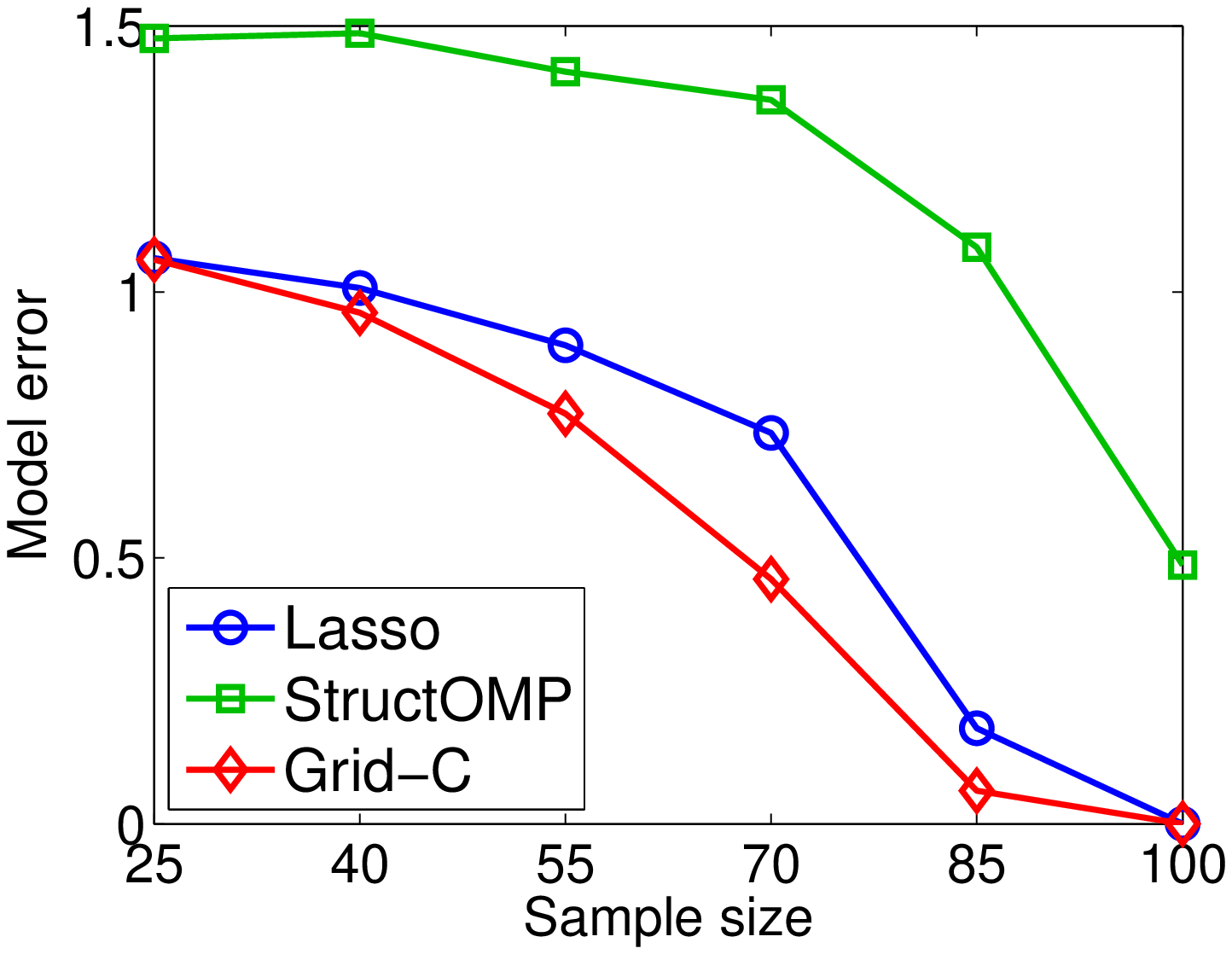} 
  \end{tabular}
  \caption {$2D$ contiguous regions: comparison between different
    methods for one (left), two (centre-left), three (centre-right) 
    and four (right) regions.}
  \label{fig:3}
\end{center}
\end{figure}

\begin{figure}
\begin{center}
  \begin{tabular}{cc}
  \includegraphics[height=1.4cm,width=1\textwidth]{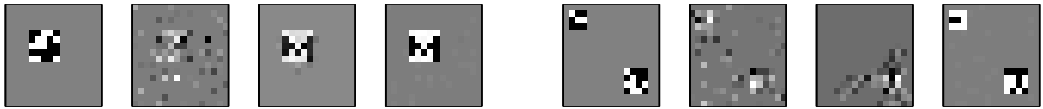}
  \end{tabular}
  \caption {$2D$-contiguous regions: model vector and vectors
    estimated by the Lasso, StructOMP and Grid-C (left to right), for
    one region    (first group) and two regions (second group).}
  \label{fig:4}
\end{center}
\end{figure}

\noindent {\bf Background subtraction.} We replicated the experiment from \cite[Sec.~7.3]{huang2009} with our method. Briefly, 
the underlying model $\beta^*$ corresponds to the pixels of the foreground of a CCTV image. 
We measured the output as a random projection plus Gaussian noise.  
Figure \ref{fig:back}-{\em Left} shows that, while the Grid-C outperforms the Lasso, it is not 
as good as StructOMP. We speculate that this result is
due to the non uniformity of the values of the image, which makes it harder for Grid-C to estimate the model. 

\noindent {\bf Image Compressive Sensing.} In this experiment, we compared the performance of Tree-C on an instance of 2D image compressive sensing, following the
experimental protocol of \cite{huang2009}. Natural images can be well
represented with a wavelet basis and their wavelet coefficients,
besides being sparse, are also structured as a hierarchical tree. We
computed the Haar-wavelet coefficients of a widely used {\em cameraman} image.
We measured the output as a random projection plus Gaussian noise.
StructOMP and Tree-C, both exploiting the tree structure, were used to recover the wavelet coefficients from the measurements and compared to the Lasso. The inverse wavelet transform was used to
reconstruct the images with the estimated coefficients. The
recovery performances of the methods are reported in Figure
\ref{fig:back}-{\em Right}, which highlights the good performance of
Tree-C.

\begin{figure}[h!]
\begin{center}
  \begin{tabular}{cc}
     \includegraphics[width=0.3\textwidth]{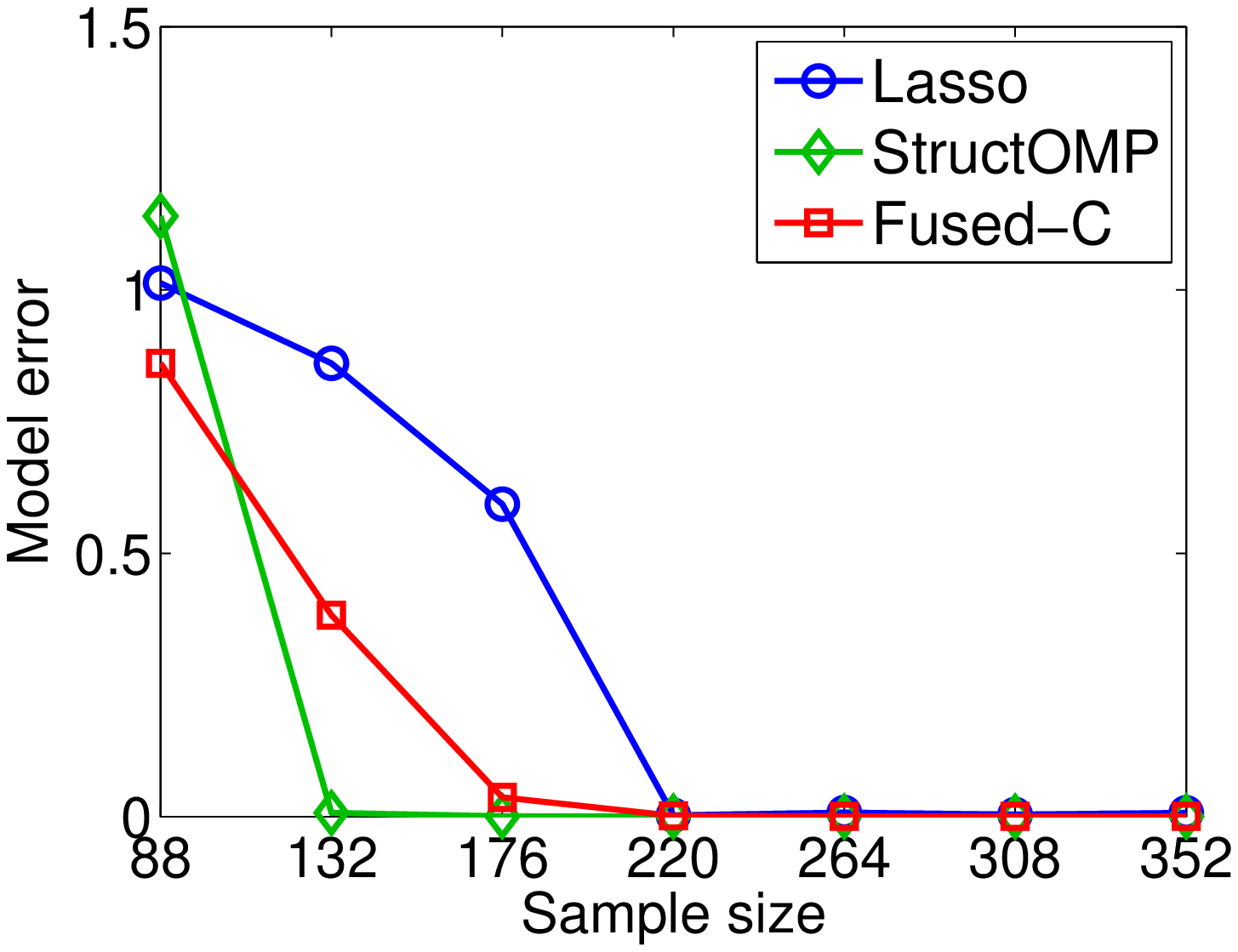}
     \includegraphics[width=0.3\textwidth]{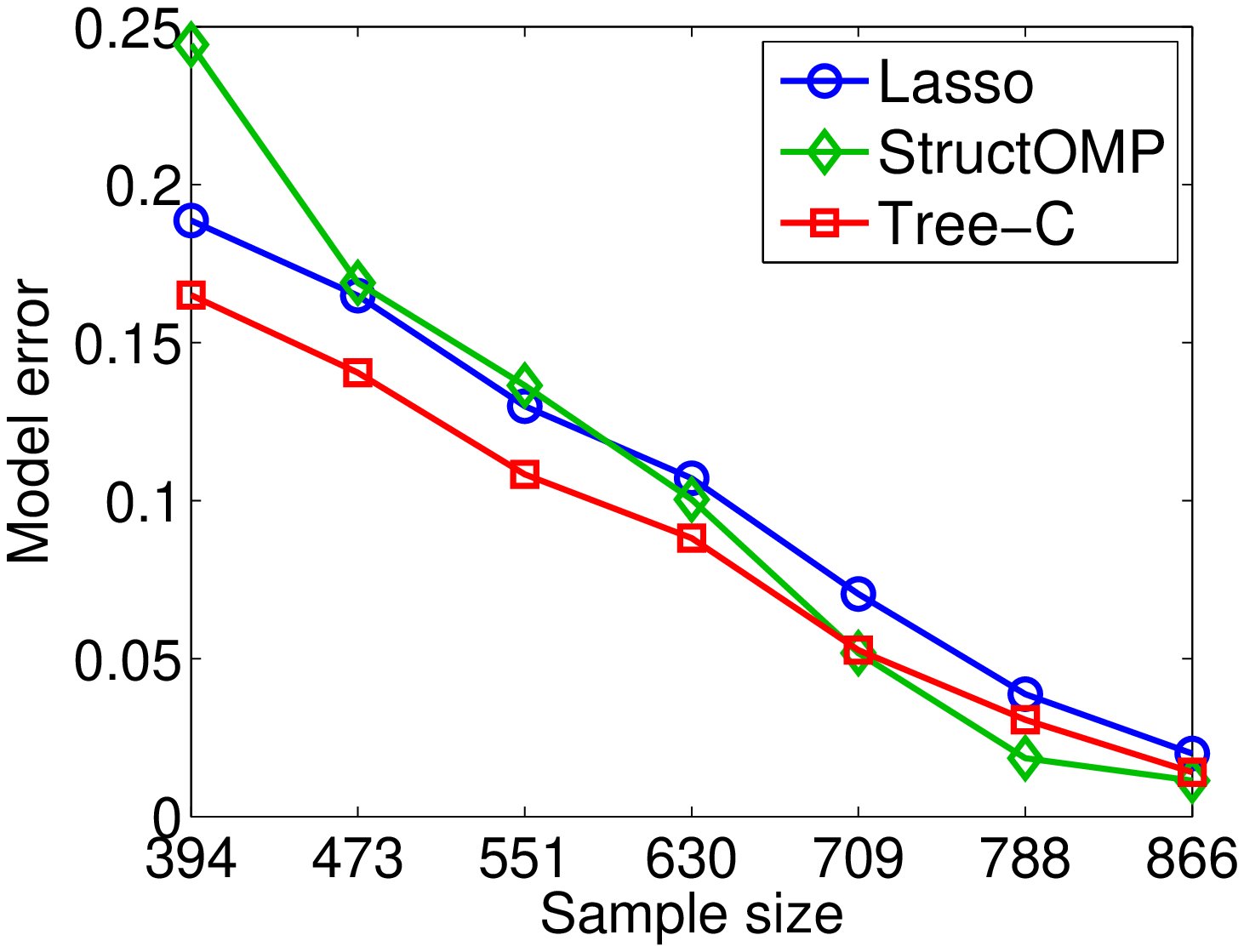}\\
  \end{tabular}
  \caption {Model error for the background subtraction (left) and {\em cameraman} (right) experiments.}
  \label{fig:back}
\end{center}
\end{figure}


\section{Conclusion}
\label{sec:6}

We proposed new families of penalties and presented a new algorithm and results on the class of structured sparsity penalty functions proposed by \cite{MMP-10}. 
These penalties can be used, among else, to learn regression vectors whose sparsity pattern is formed by few contiguous regions. 
We presented a proximal method for solving this class of penalty functions and derived an efficient fixed-point method for computing the proximity operator of our penalty.
We reported encouraging experimental results, which
highlight the advantages of the proposed penalty function over a state-of-the-art greedy method \cite{huang2009}. At the same time, our
numerical simulations indicate that the proximal method is
computationally efficient, scaling linearly with the problem size.
An important problem which we wish to address in the future is to study the
convergence rate of the method and determine whether the optimal rate
$O(\frac{1}{T^2})$ can be attained.  Finally, it would be important to
derive sparse oracle inequalities for the estimators studied here.

\bibliographystyle{plain}


\end{document}